\title[Causal Reductions]{Combining Interventional and Observational Data Using Causal Reductions}
\newtheorem{sa}{Theorem}[section]
\newtheorem{Thm}[sa]{Theorem}
\newtheorem{Cor}[sa]{Corollary}
\newcommand{\id}{\mathrm{id}}
\DeclareMathOperator*{\Indep}{{\,\perp\mkern-12mu\perp\,}}
\DeclareMathOperator*{\given}{|}
\DeclareMathOperator{\doit}{do}
\newcommand{\RN}{\mathbb{R}}
\tikzstyle{node} = [circle, minimum size = 18mm, thick, draw =black!80]
\tikzstyle{nodeinter} = [rectangle, minimum size = 16mm, thick, draw =black!80, fill=gray!30]
\tikzstyle{nodeinterwhite} = [rectangle, minimum size = 16mm, thick, draw =black!80]
\tikzstyle{nodeobserved} = [circle, minimum size = 18mm, thick, draw =black!80, fill=gray!30]
\tikzstyle{box} = [rectangle, draw =black!0]
\tikzstyle{arrow} = [thick,->,>=stealth,line width=0.6mm]
\tikzstyle{arrow2} = [dashed,->,>=stealth,line width=0.6mm]
\DeclareSymbolFont{bbold}{U}{bbold}{m}{n}
\DeclareSymbolFontAlphabet{\mathbbold}{bbold}
\newcommand\ind{\mathbbold{1}}
\newcommand\B[1]{\bm{#1}}
\newcommand\C[1]{\mathcal{#1}}
\newcommand\eref[1]{(\ref{#1})}
\tikzstyle{var}=[circle,draw=black,fill=white,thick,minimum size=20pt,inner sep=0pt]
\tikzstyle{varh}=[circle,draw=gray,fill=white,thick,minimum size=20pt,inner sep=0pt,dashed]
\tikzstyle{arr}=[->,>=stealth',draw=black,thick]
\tikzstyle{arrh}=[->,>=stealth',draw=gray,fill=gray,thick,dashed]
\tikzstyle{biarr}=[<->,>=stealth',draw=black,fill=black,thick]
\tikzstyle{biarrh}=[<->,>=stealth',draw=gray,fill=gray,thick,dashed]
\tikzstyle{noarr}=[draw=black,fill=black,thick]
\tikzstyle{noarrh}=[draw=gray,fill=gray,thick,dashed]
\tikzstyle{fac}=[rectangle,draw=black!50,fill=black!20,thick,minimum size=10pt] 
\tikzstyle{varc}=[rectangle,draw=black,fill=white,thick,minimum size=20pt,inner sep=0pt]
\begin{document}

\maketitle

\begin{abstract}%
  Unobserved confounding is one of the main challenges when estimating causal effects. We propose a causal reduction method that, given a causal model, replaces an arbitrary number of possibly high-dimensional latent confounders with a single latent confounder that takes values in the same space as the treatment variable, without changing the observational and interventional distributions the causal model entails. This allows us to estimate the causal effect in a principled way from combined data without relying on the common but often unrealistic assumption that all confounders have been observed. We apply our causal reduction in three different settings. In the first setting, we assume the treatment and outcome to be discrete. The causal reduction then implies bounds between the observational and interventional distributions that can be exploited for estimation purposes. In certain cases with highly unbalanced observational samples, the accuracy of the causal effect estimate can be improved by incorporating observational data. Second, for continuous variables and assuming a linear-Gaussian model, we derive equality constraints for the parameters of the observational and interventional distributions. Third, for the general continuous setting (possibly nonlinear and non-Gaussian), we parameterize the reduced causal model using normalizing flows, a flexible class of easily invertible nonlinear transformations. We perform a series of experiments on synthetic data and find that in several cases the number of interventional samples can be reduced when adding observational training samples without sacrificing accuracy.%
\end{abstract}

\begin{keywords}%
  causal inference, causal effect estimation, dose-response curve, latent confounding, combining observational and interventional data%
\end{keywords}

\section{Notation}
We decided to make use of the standard notation in the causal Bayesian network literature,
which obfuscates the difference between a conditional distribution and a Markov kernel
(often called a conditional probability table in the discrete case). For simplicity of the following
exposition, assume discrete variables (for the continuous case similar arguments hold but a rigorous
measure-theoretic treatment gets more technical). While Markov kernels $p(Y||X)$ are defined for all
possible inputs $x$, a conditional distribution $p(Y|X)$ is only well-defined for values 
$X=x$ with $p(x)>0$.
The quantity $p(y|x,w)$ is defined as a Markov kernel in Equation \ref{eq:red-3}, which is clearly well-defined for all
values of $(x,w)$ including those where $x$ differs from $w$.

\section{Introduction}
In this work, we propose a novel, principled approach for causal effect estimation that can efficiently combine observational and interventional samples, even in the presence of unobserved confounding in the observational regime. We show that this method can potentially reduce the required Randomized Controlled Trial (RCT) sample size when sufficient observational samples are available.
A complication when estimating causal effects is the potential presence of observed and---in particular---unobserved \emph{confounders} (common causes of the cause and the effect). Our key technical contribution is a construction that typically \emph{reduces} the size of the latent confounder space in a causal model. This \emph{causal reduction} operation shows that without loss of generality, one only needs to model a single latent confounding variable that takes on values in the same space as the treatment variable, even if, in reality, there could be many latent confounders and their joint value space could be much larger. This ``reduced confounder'' suffices to model exactly the observational distribution of treatment and outcome, as well as the distribution of outcome under any perfect intervention on treatment.
The causal reduction facilitates a parsimonious joint parameterization of the observational and interventional distributions. We apply causal reductions to causal inference from combined observational and interventional data in three different settings: (i) discrete treatment and outcome, (ii) continuous treatment and outcome assuming a linear-Gaussian model, (iii) continuous treatment and outcome assuming a possibly nonlinear, non-Gaussian model.

For discrete treatment and outcome variables, we show in Section \ref{sec:binary:bounds} that the causal reduction trivially implies bounds between the observational and interventional distributions. For binary variables, these bounds were already known \citep{ManskiNagin:98}, and they can also be seen as special cases of the well-known instrumental variable bounds \citep{pearl_testability_2013,balke_bounds_1997} under perfect compliance. We show that these bounds are exploited straightforwardly by the maximum likelihood estimator from combined observational and interventional data. In particular, we analyze the special case where the observational samples are heavily imbalanced: a large majority was either treated or untreated. In such settings, if sufficient observational data is available, one can sometimes substantially reduce the number of interventional samples required to accurately estimate the Average Treatment Effect (ATE) or the Conditional Average Treatment Effect (CATE).

Next, we apply causal reductions to the case of a continuous treatment variable. For the linear-Gaussian case (that is, where all interactions are linear, and all distributions are Gaussian), we prove in 
Section \ref{sec:bounds} that our reduced parameterization implies that the observational and interventional distributions are not independent but are related by equality constraints on their parameters. 

Last, we address the more realistic general (nonlinear, non-Gaussian) setting. In Section \ref{sec:implementations}, we parameterize the reduced causal model using a flexible class of easily invertible nonlinear transformations, so-called normalizing flows \citep{tabak_family_2013, rezende_variational_2016}. In combination with our reduction technique, normalizing flows enable the use of a simple maximum-likelihood approach to estimate the reduced model parameters. On simulated data, we observe that parameter sharing allows one to learn a more accurate model from a combination of data than from each subset individually. It is not understood at present whether this is due to constraints between the observational and interventional distributions similar to those that arise in the other settings, or due to the likelihood term involving the observational data acting as a regularizer for the interventional data. Nonetheless, we consider the empirical results as encouraging.


\section{Related work}
Prior work on causal inference from multiple datasets can be roughly divided into addressing three different tasks: (i) identifying causal effects when the causal structure is known (see, e.g., \cite{LeeCorreaBareinboim2020} and references therein), (ii) discovering/learning the causal structure (see, e.g., \cite{Mooij++_JMLR_20} and references therein), and (iii) estimating causal effects. The present work addresses the latter task.
Randomized Controlled Trials (RCTs) are the gold standard for estimating a causal effect \citep{fisher_1925}. The goal of an RCT is to remove all confounding biases by randomization. If we have data from a perfect RCT with no non-compliance, we can directly compute the causal effect. However, in practice, it is often costly, risky, unethical, or simply impossible to perform an RCT.

Therefore, a plethora of work on estimating causal effects solely from observational data exists. The vast majority of proposed methods assume that a set of observed variables can be used to adjust for all confounding factors \citep{gelman_propensity_2005,colnet_causal_2020}. Unfortunately, one can typically not test this assumption, and the reliability of the conclusions of such observational studies is debated \citep{Madigan++2014}.

Researchers recently started combining those different modalities to address the limitations of causal effect estimation from interventional or observational data alone. Most prior work on this topic still relies on the assumption that all confounders are observed in the observational regime (e.g., \cite{Silva2016,rosenman_propensity_2018}). Only very few approaches allow for latent confounding in the observational regime. The method by \cite{rosenman_combining_2020} attempts to reduce confounding bias rather than completely removing it. For binary treatments, \cite{kallus_removing_2018} rely on an additional assumption that the hidden confounder has a certain parametric structure that can be modelled effectively (which may also reduce confounding bias). In contrast, \cite{athey_combining_2020} depend on observed short-term and long-term outcome variables. In contrast, our approach allows us to correct confounding bias without making such additional assumptions.
Another approach that sidesteps the strong untestable assumption of no unobserved confounding is to \emph{bound} the causal effect in terms of properties of observational data \citep{balke_bounds_1997,ManskiNagin:98,pearl_testability_2013}. Recently, \cite{wolfe_inflation_2019} introduced a technique called inflation that can be used to derive bounds. 
While these bounds are typically valid in the presence of arbitrary unobserved confounding, we know of no work so far that exploits such bounds to obtain better estimates of the causal effect from combined data.
Last, \cite{zhang_partial_2022} propose an operation that turns an SCM into a `canonical' form that remains counterfactually equivalent but may have a reduced latent space. Our `causal reduction' operation, on the other hand, is only designed to preserve interventional equivalence, which can be achieved in general with a smaller latent space.\footnote{For example, assuming the causal graph in Figure \ref{fig:graphical_explanation} (f), with binary treatment and effect, $\mathbf{W}$ needs to have 8 discrete states for counterfactual equivalence, where we show that for interventional equivalence we only need 2 states, i.e. $\mathbf{W}$ can be binary.}


\section{Causal Reductions}\label{sec:reduction}
We introduce our key technical contribution that we refer to as a \emph{causal reduction}. 
This is a simple procedure to replace any number of confounders by a single confounder that takes on values in the same space as the treatment variable, while preserving the important causal semantics of the model, in particular, the observational distribution and the causal effect of treatment on outcome.
While we start by considering a simple setting with only two observed variables, we extend this in Appendix~\ref{sec:reduction_with_confounders} to account for additional observed confounders.
The basic construction procedure can possibly be applied in more general settings as well.
For the measure-theoretic justification of our derivation we refer to \citep{Forre2021}.
While we choose to make use of the formalism of causal Bayesian networks to derive the result, similar results can be obtained easily in other causal modeling frameworks, such as structural causal models and the potential-outcome framework, see Appendix \ref{app:potential_outcomes}.

\subsection{Reduction without observed confounders}
\label{sec:reduction_without_confounders}

Consider a treatment variable $\mathbf{X}\in \mathcal{X}$
and an outcome variable $\mathbf{Y}\in \mathcal{Y}$.
The spaces $\mathcal{X}$ and $\mathcal{Y}$ can be any standard measurable space, for example, $\mathcal{X} = \mathbb{R}^M$ and $\mathcal{Y} = \mathbb{R}^N$ in the continuous setting, or $\mathcal{X} = \{1,\dots,m\}$ and $\mathcal{Y} = \{1,\dots,n\}$ in the discrete setting, with binary treatment ($m=2$) and a real-valued treatment variable ($M=1$) as frequently occurring special cases.
We assume that the outcome does not cause the treatment. Furthermore, let there exist $K$ latent confounders $Z_1, \dots, Z_K$, where each $Z_i \in \mathcal{Z}_i$ also in some standard measurable space, with an arbitrary dependency structure. See Figure \ref{fig:graphical_explanation} (a) for an example of a corresponding Directed Acyclic Graph (DAG) of the causal Bayesian network.\footnote{Alternatively, one can interpret this as the graph of a Structural Causal Model, see for example \citep{Bongers++_AOS_21}.} Without loss of generality, we can summarize the $K$ latent confounders $Z_1, \dots, Z_K$ with arbitrary dependency structure using a single latent confounder $\mathbf{Z} \in \mathcal{Z} = \prod_{k=1}^K \mathcal{Z}_k$: 
$p(\mathbf{x}, \mathbf{y}) = \int_{\mathcal{Z}_1} \dots \int_{\mathcal{Z}_K} p(\mathbf{x},\mathbf{y}, z_1, \dots, z_K) dz_1 \dots dz_K = \int_\mathcal{Z} p(\mathbf{x,y,z}) d\mathbf{z}.$
The resulting causal Bayesian network is shown in Figure \ref{fig:graphical_explanation} (b), which has the following factorization:
\begin{align}
    p(\mathbf{x,y,z}) &= p(\mathbf{y}|\mathbf{x},\mathbf{z}) p(\mathbf{x}|\mathbf{z}) p(\mathbf{z}). \label{eq:red-1}
\end{align}
We aim to replace the above causal Bayesian network with one that is interventionally equivalent with respect to perfect interventions on any subset of $\{\mathbf{X},\mathbf{Y}\}$, but where the latent confounder space $\mathcal{Z}$ can be smaller. In particular, this means that we will preserve the observational distribution $p(\mathbf{x},\mathbf{y})$ and the ``causal effect'' $p(\mathbf{y}|\doit(\mathbf{x}))$, that is, the distribution of $\mathbf{Y}$ for any perfect intervention on $\mathbf{X}$ that sets it to a value $\mathbf{x} \in \mathcal{X}$.

\begin{figure}[h]
\begin{center}
\hspace{-0.8cm}
\begin{subfigure} 
\centering
\scalebox{0.45}{\begin{tikzpicture}
\LARGE
\node [node](z2) {$Z_2$};
\node [node, left = of z2](z1) {$Z_1$};
\node [node, right = of z2](zd) {$Z_K$};
\node [text width=3cm, left = of zd, xshift=3.4cm](dots) {$\mathbf{\dots}$};
\node [node, below right = of z2, xshift=-0.65cm](z3) {$Z_3$};
\node [node, below left = of z2, xshift=0.65cm](z4) {$Z_4$};
\node [nodeobserved, below = of z3](x) {$\mathbf{Y}$};
\node [nodeobserved, below = of z4](y) {$\mathbf{X}$};

\draw [arrow] (z1) -- (z2);
\draw [arrow] (z1) -- (z3);
\draw [arrow] (z2) -- (z3);
\draw [arrow] (z3) -- (z4);
\draw [arrow] (zd) -- (z4);
\draw [arrow] (zd) -- (z3);
\draw [arrow] (z4) -- (y);
\draw [arrow] (z4) -- (x);
\draw [arrow] (z3) -- (x);
\draw [arrow] (y) -- (x);
\draw [arrow] (zd) to [out=-90,in=45] (x);
\end{tikzpicture}}
\end{subfigure}
\hspace{-0.8cm}
\begin{subfigure} 
    \centering
\scalebox{0.45}{\begin{tikzpicture}
\LARGE
\node [node](z) {$\mathbf{Z}$};
\node [nodeobserved, below left = of z, yshift=-0.25cm, xshift=0.65cm](x) {$\mathbf{X}$};
\node [nodeobserved, below right = of z, yshift=-0.25cm, xshift=-0.65cm](y) {$\mathbf{Y}$};

\draw [arrow] (z) -- (y);
\draw [arrow] (z) -- (x);
\draw [arrow] (x) -- (y);
\end{tikzpicture}}
\end{subfigure}
\begin{subfigure} 
    \centering
\scalebox{0.45}{\begin{tikzpicture}
\LARGE
\node [nodeobserved] (y) {$\mathbf{Y}$};
\node [node, above = of y](z) {$\mathbf{Z}$};
\node [nodeobserved, double, left = of y](x) {$\mathbf{X}$};
\node [node, left = of z](w) {$\mathbf{W}$};
\draw [arrow] (w) -- (x);
\draw [arrow] (x) -- (y);
\draw [arrow] (z) -- (w);
\draw [arrow] (z) -- (y);
\end{tikzpicture}}
\end{subfigure}
\end{center}
\begin{center}
\begin{subfigure} 
    \centering
\scalebox{0.45}{\begin{tikzpicture}
\LARGE
\node [node](z) {$\mathbf{W, Z}$};
\node [nodeobserved, double, below left = of z, yshift=-0.25cm, xshift=0.65cm](x) {$\mathbf{X}$};
\node [nodeobserved, below right = of z, yshift=-0.25cm, xshift=-0.65cm](y) {$\mathbf{Y}$};

\draw [arrow] (z) -- (y);
\draw [arrow] (z) -- (x);
\draw [arrow] (x) -- (y);
\end{tikzpicture}}
\end{subfigure}
\begin{subfigure}
    \centering
\scalebox{0.45}{\begin{tikzpicture}
\LARGE
\node [nodeobserved] (y) {$\mathbf{Y}$};
\node [node, above = of y](z) {$\mathbf{Z}$};
\node [nodeobserved, double, left = of y](x) {$\mathbf{X}$};
\node [node, left = of z](w) {$\mathbf{W}$};
\draw [arrow] (w) -- (x);
\draw [arrow] (x) -- (y);
\draw [arrow] (w) -- (z);
\draw [arrow] (z) -- (y);
\end{tikzpicture}}
\end{subfigure}
\begin{subfigure} 
    \centering
\scalebox{0.45}{\begin{tikzpicture}
\LARGE
\node [node](z) {$\mathbf{W}$};
\node [nodeobserved, double, below left = of z, yshift=-0.25cm, xshift=0.65cm](x) {$\mathbf{X}$};
\node [nodeobserved, below right = of z, yshift=-0.25cm, xshift=-0.65cm](y) {$\mathbf{Y}$};

\draw [arrow] (z) -- (y);
\draw [arrow] (z) -- (x);
\draw [arrow] (x) -- (y);
\end{tikzpicture}}
\end{subfigure}
\end{center}

    \caption[A graphical explanation of our causal reduction technique.]{A graphical explanation of our causal reduction technique. We use a double circle to indicate that a variable is a deterministic function of its parents.
    }
    \label{fig:graphical_explanation}
\end{figure}
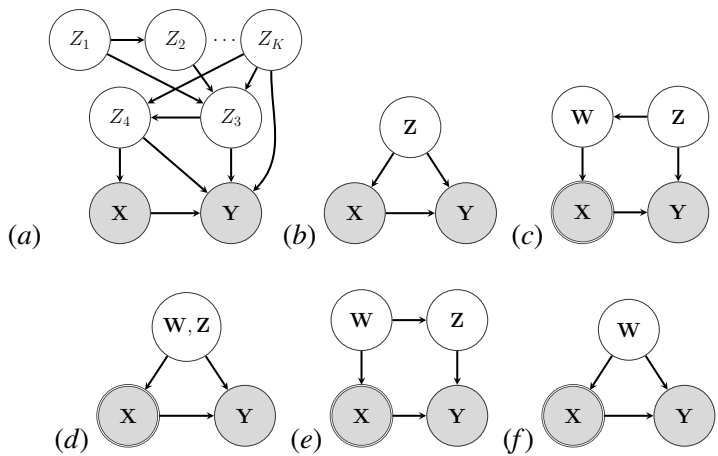

First, we generate a copy $\mathbf{W}:=\mathbf{X}$ of the treatment variable $\mathbf{X}$. We will interpret $\mathbf{W}$ as a latent variable and $\mathbf{X}$ as an observed deterministic effect of $\mathbf{W}$ via the identity function $\mathbf{X}=\id(\mathbf{W})$.
We obtain the Bayesian Network in Figure \ref{fig:graphical_explanation} (c):
 \begin{align}
    p(\mathbf{x,y,w,z}) &= p(\mathbf{y}|\mathbf{x},\mathbf{z})  p(\mathbf{x}|\mathbf{w})  p(\mathbf{w}|\mathbf{z})  p(\mathbf{z}), \label{eq:red-2}
\end{align}
where $p(\mathbf{w}|\mathbf{z}):= p(\mathbf{x}|\mathbf{z})|_{\mathbf{x}=\mathbf{w}}$ is a copy of the Markov kernel that appears as a factor in Equation~\ref{eq:red-1}, but evaluated in $\mathbf{w}$ rather than in $\mathbf{x}$. Furthermore, $p(\mathbf{x}|\mathbf{w}):=\delta_\mathbf{w}(\mathbf{x})$ is the Dirac measure centered at $\mathbf{w}$, representing the deterministic identity map from $\mathbf{W}$ to $\mathbf{X}$. If we marginalize out $\mathbf{W}$ we arrive at the initial causal Bayesian network in Figure \ref{fig:graphical_explanation} (b) again. Since interventions on observed variables commute with marginalizing over latent variables \citep{Bongers++_AOS_21}, the Bayesian networks in Figure \ref{fig:graphical_explanation} (b) and (c) are interventionally equivalent with respect to perfect interventions on any subset of $\{\mathbf{X},\mathbf{Y}\}$. 

Second, we refactorize the latent distribution as shown in Figure  \ref{fig:graphical_explanation} (c), (d) and (e):
\begin{align}
    p(\mathbf{x,y,w,z}) &= p(\mathbf{y}|\mathbf{x},\mathbf{z})  p(\mathbf{x}|\mathbf{w})  p(\mathbf{w}|\mathbf{z})  p(\mathbf{z}) \label{eq:red-3}\\
               &= p(\mathbf{y}|\mathbf{x},\mathbf{z})  p(\mathbf{x}|\mathbf{w})  p(\mathbf{w,z}) \label{eq:red-4}\\
               &= p(\mathbf{y}|\mathbf{x},\mathbf{z})  p(\mathbf{x}|\mathbf{w})  p(\mathbf{z}|\mathbf{w})  p(\mathbf{w}). \label{eq:red-5}
\end{align}
The Bayesian networks representing these three factorizations are interventionally equivalent with respect to perfect interventions on any subset of $\{\mathbf{X},\mathbf{Y}\}$, as we only factor the latent distributions differently (and do not consider interventions on the latent variables).

Last, we can marginalize over $\mathbf{Z}$ and obtain:
\begin{align}
    p(\mathbf{x,y,w}) &= p(\mathbf{y}|\mathbf{x},\mathbf{w})  p(\mathbf{x}|\mathbf{w})  p(\mathbf{w}), \label{eq:red-6}
\end{align}
where we used the following composed Markov kernel:
\begin{align}
        p(\mathbf{y}|\mathbf{x},\mathbf{w}) &:= \int p(\mathbf{y}|\mathbf{x},\mathbf{z})  p(\mathbf{z}|\mathbf{w}) \,d\mathbf{z}. \label{eq:red-7}
\end{align}
Again, since marginalizing over latent variables and interventions on observed variables commute, the final Bayesian network in Figure \ref{fig:graphical_explanation} (f) is interventionally equivalent to the ones in Figure \ref{fig:graphical_explanation} (a--e) with respect to perfect interventions on any subset of $\{\mathbf{X},\mathbf{Y}\}$.

Since $\mathbf{W}$ is a copy of $\mathbf{X}$, we successfully replaced the latent confounder space $\mathcal{Z} = \prod_{k=1}^K \mathcal{Z}_k$ with $\mathcal{X}$. In case $\mathcal{Z} = \RN^K$ and $\mathcal{X} = \RN^M$ and $M < K$, the dimensionality of the latent space will be reduced. 
In the common case of a one-dimensional $\mathbf{X}$, we expect $M = 1 \ll K$ and therefore achieve a significant reduction of the latent space. 
In the discrete case, the cardinality of $\mathcal{X}$ may be much lower than that of $\mathcal{Z}$.
For example, in case treatment is binary, a single binary confounder in the model suffices. We formulate the conclusion as a theorem in Appendix \ref{sec:causal_red_thm}. Last, in Appendix \ref{sec:reduction_with_confounders}, we derive a reduced causal model for the case of additional observed confounders.

\subsection{Replacing conditional distributions by functions}
\label{sec:add_latents}
Whereas previously, we made use of causal Bayesian networks to formulate our reduction operation, we now move to Structural Causal Models (SCMs) \citep{pearl_causality_nodate,Bongers++_AOS_21} in order to obtain convenient parameterizations in the non-parametric setting. We make use of the exogenous variables $\mathbf{U,V}$ to represent the noise in the reduced causal model. This, in turn, allows us to express all causal relationships as deterministic functions. Estimating the model then boils down to estimating these functions, as we will illustrate in Section~\ref{sec:implementations}.

\begin{theoremEnd}{Thm}
\label{gaus_plus_map}
Let $\mathbf{Y}$ be a `conditional' $\mathbb{R}^N$-valued random variable with Markov kernel $P(\mathbf{Y}|\mathbf{X})$ and input $\mathbf{X}$ that can take values in any measurable space. 
Then there exists an $N$-dimensional standard normal random variable $\mathbf{V} \sim \mathcal{N}(\mathbf{0},\mathbf{I}_N)$ independent of $\mathbf{X}$ and a deterministic measurable map $F$ such that:
\begin{align}
    \mathbf{Y} = F(\mathbf{V}, \mathbf{X})\quad \text{a.s.}
\end{align}
Furthermore, the map $F$ is `well-behaved', in the sense that it is composed out of (inverse) conditional cumulative distribution functions.
\end{theoremEnd}
\begin{proofEnd}
We use Theorem \ref{cdf-ind} inductively. 
\begin{enumerate}
    \item Consider the cqf $F_1$ of $P(Y_1|\mathbf{X})$. Then by \ref{cdf-ind} there is a random variable $E_1 \sim U[0,1]$ independent of $\mathbf{X}$ such that $Y_1=F_1(E_1|\mathbf{X})$ a.s.
    \item Now consider the cqf $F_2$ of $P(Y_2|E_1, \mathbf{X})$. Then by \ref{cdf-ind} there is a random variable $E_2 \sim U[0,1]$ independent of $E_1$, $\mathbf{X}$ such that $Y_2 = F_2(E_2|E_1, \mathbf{X})$ a.s.
    \item Now consider the cqf $F_3$ of $P(Y_3|E_2,E_1, \mathbf{X})$. Then by \ref{cdf-ind} there is a random variable $E_3 \sim U[0,1]$ independent of $E_2,E_1$, $\mathbf{X}$ such that $Y_3 = F_3(E_3|E_2,E_1, \mathbf{X})$ a.s.
    \item and so on .... until:
    \item $Y_N = F_N(E_N|E_{N-1},\dots,E_1, \mathbf{X})$ a.s. with $E_N \sim U[0,1]$ independent of $E_{N-1},\dots,E_1$, $\mathbf{X}$.
\end{enumerate}
Now we put $Z_d:= \Phi^{-1}(E_d)$, where $\Phi$ is the cdf of $\mathcal{N}(0,1)$. Then $E_d = \Phi(Z_d)$ and the $Z_d$ are $\mathcal{N}(0,1)$-distributed and  $\mathbf{Z}=(Z_1,\dots,Z_N)$ is independent of $\mathbf{X})$. So $\mathbf{Z}=(Z_1,\dots,Z_N) \sim \mathcal{N}(\mathbf{0},\mathbf{I}_N)$ and independent of $\mathbf{X}$.
Furthermore, we have almost surely the equations:
\begin{align}
    Y_1 & = F_1(\Phi(Z_1)|\mathbf{X}),\\
    Y_2 & = F_2(\Phi(Z_2)|\Phi(Z_1), \mathbf{X}),\\
    \vdots &\qquad\qquad\qquad \ddots \\
    Y_N & = F_M(\Phi(Z_N)|\Phi_{(Z_{N-1})},\dots,\Phi(Z_1), \mathbf{X}).
\end{align}
This shows the claim.
\end{proofEnd}
Applying Theorem~\ref{gaus_plus_map} twice gives a reduced SCM from the reduced causal Bayesian network in Equation \ref{eq:red-6} with structural equations
\begin{align}
  \mathbf{X} & = F(\mathbf{U}), \label{eq:reduced_se_X}\\
  \mathbf{Y} & = G(\mathbf{U, V, X}), \label{eq:shared_function}
\end{align}
where $\mathbf{U} \sim \mathcal{N}(\mathbf{0, I}_M), \mathbf{V} \sim \mathcal{N}\left(\mathbf{0, I}_N\right)$ and $\mathbf{U} \Indep \mathbf{V}$, and $F, G$ are  deterministic maps. This SCM encodes the same observational distribution $p(\mathbf{x},\mathbf{y})$ and interventional distributions $p(\mathbf{y} | \doit(\mathbf{x})),$ $p(\mathbf{x} | \doit(\mathbf{y}))$ as the causal Bayesian network. This allows us to ``parameterize'' the reduced causal model in terms of the two functions $F$ and $G$.

\section{Parameter estimation in the discrete case}
\label{sec:binary:bounds}

In this section we present the first application of our causal reduction technique.
We formulate the maximum likelihood estimator of the causal effect from observational and interventional data, considering the case in which all observed variables are discrete.
The exact maximum likelihood estimator exploits that the parameters of the two prediction tasks are shared. 

\subsection{Maximum likelihood estimation}
To formulate the corresponding maximum likelihood estimation problem, we introduce real-valued parameters
\begin{equation*}
  \begin{cases}
    \phi_w := p(w) & \text{for $w \in \C{X}$} \\
    \vartheta_{y\mid x} := p(Y=y \mid X=x) & \text{for $x\in\C{X}, y\in\C{Y}$} \\
    \psi_{y\mid x} := p(y \mid \doit(x)) & \text{for $x\in\C{X}, y\in\C{Y}$.}
  \end{cases}
\end{equation*}
The parameters $\phi$ and $\vartheta$ parameterize the observational distribution, while $\psi$ directly parameterizes the interventional distributions.
These parameters are subject to the following constraints:
\begin{align}
    \forall &x \in \C{X}: 0 \le \phi_x \le 1, \qquad \sum_{x\in\C{X}} \phi_x = 1\\
\forall &x\in\C{X} \ \forall y\in\C{Y}: 0 \le \vartheta_{y \mid x} \le 1, \qquad  \forall x \in \C{X}: \sum_{y \in\C{Y}} \vartheta_{y \mid x} = 1\\
  \forall &x\in\C{X} \ \forall y\in\C{Y}: \phi_x \vartheta_{y \mid x} \le \psi_{y,x} \le \phi_x \vartheta_{y \mid x} + 1 - \phi_x, \qquad \forall x \in \C{X}: \sum_{y\in\C{Y}} \psi_{y\mid x} = 1. \label{eq:discrete_causal_effect_bound_parameters}
\end{align}
Apart from the obvious nonnegativity and normalization constraints, these also include the bounds of \cite{ManskiNagin:98} in Equation~\ref{eq:discrete_causal_effect_bound} in Appendix \ref{sec:manski}.
Given an observational dataset $(x^O_1,y^O_1),\dots,(x^O_{N_O},y^O_{N_O})$ and an interventional dataset
$(x^I_1,y^I_1),$ $\dots,(x^I_{N_I},y^I_{N_I})$, we obtain the total log-likelihood in terms of the parameters:
\begin{equation}\label{eq:joint-log-likelihood-discrete}\begin{split}
  \ell(\phi,\vartheta,\psi; x^{I},y^{I},x^{O},y^{O}) 
  &= \sum_{j=1}^{N_I} \log p(y^{I}_j \mid \doit(x^{I}_j)) + \sum_{j=1}^{N_O} \log p(x^O_j,y^O_j) \\
  &= \sum_{j=1}^{N_I} \log \psi_{y^{I}_j,x^{I}_j} + \sum_{j=1}^{N_O} \big( \log \phi_{x^O_j} + \log \vartheta_{y^O_j \mid x^O_j} \big).
\end{split}\end{equation}
This decomposes as a sum of two terms, the first term only involving the interventional data and the parameters $\psi$, the second term only involving the observational data and the parameters $\vartheta, \phi$.
However, importantly, the two problems are not independent, as the parameters have to satisfy the constraints in Equation~\ref{eq:discrete_causal_effect_bound_parameters}.

In principle, one can solve for the maximum likelihood estimator by using the technique of Lagrange multipliers to take into account the (equality and inequality) constraints on the parameters.
We will not do so here as the calculation seems to become rather cumbersome, and it does not seem possible to obtain an analytical closed-form expression for the maximum likelihood estimator.
However, standard numerical optimization techniques can be applied to compute the maximum likelihood estimator numerically.

\subsection{Infinite observational data}

To gain some intuition, we will instead consider the limit that the interventional data size $N_I$ is fixed, whereas the observational data size $N_O \to \infty$. 
In that limit, the observational part of the log-likelihood dominates, and we can easily optimize this with respect to $\vartheta$ and $\phi$ to obtain:
$$\hat \phi_{x} = \frac{N^O_{x+}}{N^O_{++}}, \qquad \hat \vartheta_{y \mid x} = \frac{N^O_{xy}}{N^O_{x+}},$$
where $N^O_{++} := \sum_{x\in\C{X}} N^O_{x+}$, with $N^O_{x+} := \sum_{y\in\C{Y}} N^O_{xy}$, with $N^O_{xy} := \sum_{j=1}^{N_O} \ind_{x}(x^O_j) \ind_{y}(y^O_j)$.
Given these (asymptotically) optimal $\hat\vartheta, \hat\phi$ we can now maximize the interventional part of the log-likelihood with respect to $\psi$, where we have to bear in mind the constraints in Equation~\ref{eq:discrete_causal_effect_bound_parameters}.
The optimum for $\psi_{y,x}$ will either be at the lower or the upper bound, or in between. 
Ignoring the bounds, the optimum would be taken at $\frac{N^I_{y|x}}{N^I_{+|x}}$. 
If this falls below the lower bound, or above the upper bound, the value will be clipped at the respective bound.
Hence,
\begin{equation}\label{eq:approx_ML_estimator}
\hat\psi_{y,x} = \begin{cases}
  \hat\phi_x \hat\vartheta_{y \mid x} & \frac{N^I_{y|x}}{N^I_{+|x}} < \hat\phi_x \hat\vartheta_{y \mid x} \\
  \frac{N^I_{y|x}}{N^I_{+|x}} & \hat\phi_x \hat\vartheta_{y \mid x} \le \frac{N^I_{y|x}}{N^I_{+|x}} \le \hat\phi_x \hat\vartheta_{y \mid x} + 1 - \hat\phi_x \\
  \hat\phi_x \hat\vartheta_{y \mid x} + 1 - \hat\phi_x & \hat\phi_x \hat\vartheta_{y \mid x} + 1 - \hat\phi_x < \frac{N^I_{y|x}}{N^I_{+|x}}. 
\end{cases}\end{equation}
Since $N^O$ is large, the observational parameters $\phi, \vartheta$ are estimated with high accuracy.
  This can help improve the accuracy in the estimated interventional parameters $\psi$ by means of the bound, especially in cases where only few interventional data points are available (i.e., if some $N^I_{+|x} \ll N^O_{+|x}$). 
For larger values of $N^I$, the estimator $\hat\psi_{y,x}$ will ignore the observational data, and reduce to the standard causal effect estimator from interventional data, $\frac{N^I_{y|x}}{N^I_{+|x}}$.

\subsection{Case study: mostly untreated observational patient population}

We now consider one specific example to get an impression of when this estimator may yield a benefit, and how large the benefit can be.
We assume treatment and outcome to be binary.
We can then focus on estimating the \emph{average treatment effect (ATE)},
$$\tau := p(Y=1 \mid \doit(X=1)) - p(Y=1 \mid \doit(X=0)).$$
The maximum likelihood (plug-in) estimator for this is just
$$\hat \tau^{IO} := \hat \psi_{1,1} - \hat \psi_{1,0},$$
where we used the superscript ``$IO$'' to indicate that this estimator uses interventional and observational data.
The classical estimator of the ATE based on interventional data only is
$$\hat \tau^I := \frac{N^I_{1|1}}{N^I_{+|1}} - \frac{N^I_{1|0}}{N^I_{+|0}}.$$
Our statistical model has a 5-dimensional parameterization.
The bounds in \cite{ManskiNagin:98} (see Equation~\ref{eq:discrete_causal_effect_bound}) read explicitly:
\begin{align}
  p(x=0,y=1) &\le p(y=1\mid\doit(x=0)) \le 1 - p(x=0,y=0) \label{eq:bound_binary_x0}\\
  p(x=1,y=1) &\le p(y=1\mid\doit(x=1)) \le 1 - p(x=1,y=0) \label{eq:bound_binary_x1}.
\end{align}
In Appendix \ref{sec:fig_bounds} Figure~\ref{fig:bounds} we give an illustration of these bounds.

Let us consider the following scenario. 
We have a new drug and we are setting up an RCT. 
We have historical observational data where all individuals are untreated, i.e., $p(x=1)=0$.\footnote{A similar analysis pertains in in case $p(x=1)$ is small but nonzero.}
The bound in Equation~\ref{eq:bound_binary_x0} then becomes tight, yielding
\begin{align}
    p(y=1) = p(y=1\mid\doit(x=0)).
\end{align}
On the other hand, the bound in Equation~\ref{eq:bound_binary_x1} becomes non-informative and can be ignored.
Thus, we can identify $p(y=1\mid\doit(x=0))$ from the observational data as $p(y=1)$.
When setting up an RCT, we therefore only need a treatment group with $\doit(x=1)$ in order to identify $p(y=1\mid\doit(x=1))$, and there is no need to include a group in which individuals are not treated.
In this way, we can reduce the number of participants of the RCT by a factor of two without losing accuracy (compared to a standard RCT with a 50\%-50\% split in treatment and control group). 
We might even gain in accuracy if $N_O$ is large, because that allows to estimate $p(y=1)$, and hence $p(y=1\mid\doit(x=0))$, and therefore $\tau$, more accurately. Note, in Appendix \ref{sec:limitations} we discuss the influence of the placebo effect on the analysis above.

\section{Parameter constraints in the linear Gaussian case}
\label{sec:bounds}
For the second application of our causal reduction, we consider the case where all causal relationships in Figure \ref{fig:graphical_explanation} (a) are linear, and all distributions are Gaussian. We can then guarantee that the reduced causal model is linear Gaussian as well.

\begin{theoremEnd}{Cor}[Reduced linear Gaussian model]
\label{cor:reduction-linear-gaussian}
 Consider a linear Gaussian SCM (or causal Bayesian network with possible latent variables) with observed variables $\mathbf{X}$ and $\mathbf{Y}$ such that $\mathbf{Y}$ is not ancestor of $\mathbf{X}$. Then this causal model is interventionally equivalent to a reduced linear Gaussian causal model with the following structural equations:
\begin{align}
    \mathbf{X} &= \mathbf{a} + B \mathbf{U},\\
    \mathbf{Y} &= \mathbf{c} + D \mathbf{X} + E \mathbf{U} + F \mathbf{V}, 
\label{eq:reduced_scm_linear}
\end{align}
with vectors $\mathbf{a}$, $\mathbf{c}$ and matrices $B$, $D$, $E$, $F$, where $B$ and $F$ can be chosen to be lower-triangular with non-negative diagonal entries, and
where $\mathbf{U}$ is a standard Gaussian latent variable of the same dimension as $\mathbf{X}$ and where $\mathbf{V}$ is a standard Gaussian latent variable of the same dimension as $\mathbf{Y}$ that is independent of $\mathbf{U}$.
\end{theoremEnd}
\begin{proofEnd}
This follows the same steps as the general construction in Equations \ref{eq:red-1},  \ref{eq:red-2}, \ref{eq:red-3}, \ref{eq:red-4}, where $p(\mathbf{x}|\mathbf{w})=\delta_w(\mathbf{x})$ reflects the identity map. 
In Equation \ref{eq:red-5}, note that $p(\mathbf{z|w})$ is linear Gaussian by the well-known conditioning formula for jointly Gaussian distributions. 
We then arrive at Equation \ref{eq:red-6}, where it can be checked that in Equation \ref{eq:red-7} both parts, $p(\mathbf{z|w})$ and $p(\mathbf{y|x,z})$, are linear Gaussian, thus makes $p(\mathbf{y|x,w})$ linear Gaussian. Finally, we use the reparameterization trick together with a Cholesky decomposition, as seen in Section \ref{sec:add_latents}, to turn $p(\mathbf{w})$ into a standard Gaussian $p(\mathbf{u})$, which makes $p(\mathbf{x|u})$, as a composition of identity map and linear Gaussian also a linear Gaussian. Note that $p(\mathbf{y|x,u})$ again is linear Gaussian by similar arguments. Last we use the reparameterization trick again to obtain $p(\mathbf{y}|\mathbf{x,u, v})$ where $\mathbf{V} \sim \mathcal{N}(\mathbf{0,I}_N)$.
\end{proofEnd} We use the reduced linear Gaussian model from Corollary \ref{cor:reduction-linear-gaussian} to prove that the parameters of the interventional distribution constrain the parameters of the observational distribution.

\begin{theoremEnd}{Thm}[Linear Gaussian parameter constraints]
\label{thm:constraints}
Consider a linear-Gaussian SCM (or causal Bayesian network with possible latent variables) with two observed variables $\mathbf{X}$ and $\mathbf{Y}$ such that $\mathbf{Y}$ is not ancestor of $\mathbf{X}$. The entailed observational and interventional distributions are Gaussian. Modeling $p(\mathbf{x}), p(\mathbf{y|x})$ and $p(\mathbf{y}|\doit(\mathbf{x}))$ independently from each other could be done with the following parameterization:
\begin{align}
\label{eq:linear_px}
    p(\mathbf{x}) & = \mathcal{N}\left(\mathbf{x}|\boldsymbol{\alpha},\Sigma\right),\\
    \label{eq:linear_py_given_x}
    p(\mathbf{y|x}) & = \mathcal{N}\left(\mathbf{y}|\boldsymbol{\gamma} + \Delta \mathbf{x}, \Pi\right),\\
    \label{eq:linear_py_given_dox}
        p(\mathbf{y}|\doit(\mathbf{x})) & = \mathcal{N}\left(\mathbf{y}|\boldsymbol{\widetilde{\gamma}} + \widetilde{\Delta} \mathbf{x}, \widetilde{\Pi}\right),
\end{align}
with covariance matrices $\Sigma$, $\Pi$, $\widetilde{\Pi}$. However, using the reduced causal model from Corollary \ref{cor:reduction-linear-gaussian} we find that these parameters are constrained by the following relations:
\begin{align}
 \left(\boldsymbol{\widetilde{\gamma}} - \boldsymbol{\gamma}\right) +\left(\widetilde{\Delta} - \Delta\right) \boldsymbol{\alpha} &=0, \label{eq:constraint1}\\
   \left(\widetilde{\Delta}-\Delta\right) \Sigma \left(\widetilde{\Delta} - \Delta\right)^\top +\Pi & = \widetilde{\Pi}. \label{eq:constraint2}
\end{align}
\end{theoremEnd}
\begin{proofEnd}
The linear version of the reduced SCM in Equation \ref{eq:reduced_scm_linear} entails the following distributions over $\mathbf{x}$ and $\mathbf{y}$
\begin{align}
    p(\mathbf{x}) &= \mathcal{N}\left(\mathbf{x}|\mathbf{a},BB^\top\right), \label{eq:linear_px_2}\\
    p(\mathbf{y}|\mathbf{x}) &= \mathcal{N}\left(\mathbf{y}| \mathbf{c} + D \mathbf{x} + E B^{-1} (\mathbf{x}-\mathbf{a}\right), FF^\top),\label{eq:linear_py_given_x_2}\\
     p(\mathbf{y}|\doit(\mathbf{x})) &  = \mathcal{N}\left(\mathbf{y}|\mathbf{c} + D \mathbf{x}, EE^\top+FF^\top\right) \label{eq:linear_py_given_dox_2},
\end{align}

Comparing Equations \ref{eq:linear_px}, \ref{eq:linear_py_given_x}, \ref{eq:linear_py_given_dox} with \ref{eq:linear_px_2}, \ref{eq:linear_py_given_x_2}, \ref{eq:linear_py_given_dox_2}  we immediately get the equations for the parameters:
\begin{align}
    \boldsymbol{\alpha} & = \mathbf{a},\\
    \Sigma  & = BB^\top, \label{eq:beta} \\
    \boldsymbol{\gamma} + \Delta \mathbf{x}  & = \mathbf{c} + 
    \left(D  + E B^{-1}\right) \mathbf{x} -  E B^{-1}  \mathbf{a}, \label{eq:lin_combo_x}\\
    \boldsymbol{\gamma} & \stackrel{\mathbf{x}=\mathbf{0}}{=} \mathbf{c} - E B^{-1}  \mathbf{a}, \label{eq:lin_combo_0}\\
    \Pi & = FF^\top,\\
    \widetilde{\boldsymbol{\gamma}} & = \mathbf{c},\\
    \widetilde{\Delta} & = D,\\
    \widetilde{\Pi} & = EE^\top+FF^\top.
\end{align}
Substituting $\mathbf{a},\mathbf{c},D,FF^\top$ and then subtracting Equation \ref{eq:lin_combo_0} from \ref{eq:lin_combo_x} and solving for all $\mathbf{x}$ we get the constraints:
\begin{align}
   \Delta & = 
    \widetilde{\Delta}+ E B^{-1},\label{eq:delta}\\
\boldsymbol{\gamma} & = \widetilde{\boldsymbol{\gamma}} - E B^{-1} \boldsymbol{\alpha}, \label{eq:gamma}\\
\widetilde{\Pi} &= \Pi + EE^\top.\label{eq:zeta}
\end{align}
With Equation \ref{eq:delta} we see that $E = (\Delta - \widetilde{\Delta}) B$, which we can just plug into Equations \ref{eq:gamma} and \ref{eq:zeta}. Finally using Equation \ref{eq:beta} to replace $BB^\top$ with $\Sigma$ in Equation \ref{eq:zeta} will give the claim.
\end{proofEnd}
From Equation \ref{eq:constraint2} we can easily see that $\widetilde{\Pi} - \Pi$ is positive semidefinite. Furthermore, we see that these constraints lead to a reduced parameter count, $N$ parameters for Equation \ref{eq:constraint1} and $N(N+1)/2$ parameters for Equation \ref{eq:constraint2}, assuming $\mathbf{y}$ to be $N$-dimensional. In total, we have reduced the parameter count by $N(N+3)/2$ by modeling the parameters of the observational and interventional distributions jointly. 


\section{Estimation in the nonlinear, non-Gaussian case}
\label{sec:implementations}

In the final application of our causal reduction, we derive a flexible parameterization of the reduced model, which enables us to estimate the observational and interventional distributions by jointly learning from observational and interventional data without making strong parametric assumptions, where we parameterize the functions $F$ and $G$ in Equations~\ref{eq:reduced_se_X} and \ref{eq:shared_function} to learn the model from data.
Intuitively, the reduced SCM derived in Section \ref{sec:add_latents} tells us that the parameters of $G$ are shared among observational and interventional samples for an intervention on $\mathbf{X}$, whereas the parameters of $F$ are not. For the remaining part of this section, we focus on one-dimensional treatment outcome pairs, i.e.\ $X \in \mathcal{X}= \mathbb{R}$ and $Y \in \mathcal{Y}= \mathbb{R}$, and (optionally) an $L$-dimensional observed confounder $\mathbf{C} \in \mathcal{C} = \mathbb{R}^L$. We now use the following bijective transformations between observed variables $x, y$ and latent variables $u, v$. We define $u = f_{\boldsymbol{\phi}}(x)$ and
$v = g_{x, u; \boldsymbol{\theta}}(y)$
where the functions $f_{\boldsymbol{\phi}}$ and $g_{x,u;\boldsymbol{\theta}}$ are invertible 
for all $\boldsymbol{\phi}, x, u, \boldsymbol{\theta}$. Here, $f_{\boldsymbol{\phi}} = F^{-1}$ from Equation~\ref{eq:reduced_se_X}
and $g_{x, u; \boldsymbol{\theta}}$ is the inverse of $v \mapsto G(u,v,x)$ from Equation~\ref{eq:shared_function} (for fixed $u,x,\boldsymbol{\theta}$).\footnote{In the presence of observed confounding, we simply have to replace the functions $f_{\boldsymbol{\phi}}$ and $g$ by functions $f_{\mathbf{c};\boldsymbol{\phi}}$ and $g_{x, u, \mathbf{c}; \boldsymbol{\theta}}$.}

The SCM also specifies that $u \sim \mathcal{N}(0,1)$, $v \sim \mathcal{N}(0,1)$ and $u \Indep v$.
The transformations defined above allow us to rewrite the joint likelihood using the change of variable formula
\begin{align}
    \log p(x,y) = \log p_V(g_{x,f_{\boldsymbol{\phi}}(x); \boldsymbol{\theta}}(y)) + \log \left\lvert \frac{\partial g_{x, f_{\boldsymbol{\phi}}(x); \boldsymbol{\theta}}(y)}{\partial y} \right\lvert
    + \log p_U(f_{\boldsymbol{\phi}}(x)) + \log \left\lvert \frac{\partial f_{\boldsymbol{\phi}}(x)}{\partial x} \right\lvert,
\label{eq:log_p_obser_final}
\end{align}
where we substituted $u = f_{\boldsymbol{\phi}}(x)$ into $g_{x, u; \boldsymbol{\theta}}(y)$. 
The parameters $\boldsymbol{\phi}$ and $\boldsymbol{\theta}$ are jointly updated by minimizing $\sum_{o=1}^{N_O}- \log p(x^O_o,y^O_o)$ given $N_O$ observational training samples $(x^O_1,y^O_1),\dots,(x^O_{N_O},y^O_{N_O})$.

\label{sec:implementations_inter}
In contrast to the observational setting, we only have to consider the conditional likelihood $p(y|\doit(x))$ in the interventional case. Since we cannot use $f_{\boldsymbol{\phi}}(x)$ to impute $u$, we instead marginalize over $u$
\begin{align}
     \log p(y\mid\doit(x)) = \log \int p_V(g_{x,u; \boldsymbol{\theta}}(y)) \left\lvert \frac{\partial g_{x, u; \boldsymbol{\theta}}(y)}{\partial y} \right\lvert p(u) du.
\label{eq:log_p_inter_final}
\end{align}
Since this is a one-dimensional integral, we can approximate it accurately numerically by means of the 
trapezoidal rule. The parameter $\boldsymbol{\theta}$ can be updated by minimizing $\sum_{i=1}^{N_I}- \log p(y^I_i|\doit(x^I_i))$ given $N_I$ interventional training samples $(x^I_1,y^I_1),\dots,(x^I_{N_I},y^I_{N_I})$.

Assuming we have $N_O$ observational samples and $N_I$ interventional samples, we define the full loss as given by
\begin{align}
  \ell = \frac{1}{N_O} \sum_{o=1}^{N_O} -\log p(x^O_o, y^O_o) + \frac{1}{N_I} \sum_{i=1}^{N_I}
    - \log p(y^I_i|\doit(x^I_i)).
    \label{eq:final_loss}
\end{align}
The parameters $\boldsymbol{\phi}$ and $\boldsymbol{\theta}$ of the transformation $f$ and $g$ are learned by minimizing the loss using gradient descent. In Equation \ref{eq:final_loss}, we scale each loss term by the number of samples used for training to balance their contribution during optimization.
\label{sec:experiment}
We perform a series of experiments on simulated data, where the causal relationships between all variables are nonlinear, showing that we can significantly reduce the number of interventional samples required to estimate the interventional distribution $p(y|\doit(x))$ by training jointly with (possibly confounded) observational and interventional samples. Throughout this section, we are using the parameterization described in Section \ref{sec:implementations}, where we use linear rational spline flows \citep{dolatabadi_invertible_2020}. For a detailed description of this choice, see Appendix \ref{sec:flows}. We perform two sets of experiments: (1) We consider $K$ latent confounders $Z_1, \dots, Z_K \in \mathbb{R}$ with an arbitrary dependency structure. (2) We consider $L$ additional, observed confounders $C_1, \dots, C_L \in \mathbb{R}$ with an arbitrary dependency structure. All flow models are implemented with the automatic differentiation packages Pytorch \citep{paszke_pytorch_nodate} and Pyro \citep{bingham_pyro_2018}. All code is available under \url{https://github.com/max-ilse/CausalReduction}.

\paragraph{Without observed confounders}
\label{sec:nonlinear_experiment}
We simulate cause and effect pairs from the SCM with structural equations: $X=F(E_X, \mathbf{Z})$, $Y=G(X, E_Y, \mathbf{Z})$. A single dataset consists of observational and interventional samples. All causal relationships are simulated using fully connected neural networks with a single hidden layer, where the weights are randomly initialized. The activation functions are REctified Linear Units (ReLUs). As a result, the simulated causal mechanisms are nonlinear.  The values of $E_X, E_Y, \mathbf{Z}$ and $\doit(X)$ are sampled from a random distribution, as seen in \cite{mooij_distinguishing_2016}. A detailed step-by-step description of the simulation procedure is given in Appendix \ref{sec:data_generation_details}.
\begin{table}[h]
    \caption[Comparison of a flow model trained with interventional samples only and a flow model trained with interventional and observational samples.]{Comparison of a flow model trained with interventional samples only and a flow model trained with interventional and observational samples. We calculate the ratio $N^{*}_I/N_I$, where $N^{*}_I$ is the number of interventional samples necessary to match the interventional test log-likelihood of a flow model trained with $N_I$ interventional and $N_O=1000$ observational samples. E.g.\ in the case of dataset 3 and $N_I=100$, if we were to use only interventional samples, we would require twice as many interventional samples compared to using 100 interventional and $1000$ observational samples. For datasets 11 to 15, we simulate an additional observed confounder $\mathbf{C}$. Note that if a large number of interventional samples ($ 250 < N_I \leq 1000$) are available the improvements become smaller as shown in Appendix \ref{sec:all_results}.}
\centering
\begin{tabular}{ c | c c c c c c c c c c | c c c c c}
 $N_I$ & 1 & 2 & 3 & 4 & 5 & 6 & 7 & 8 & 9 & 10 & 11 & 12 & 13 & 14 & 15\\
 \hline
  50 & 1.4 &1.8 &2.2 &1.2 &0.2 &2.2 &2.1 &1.7 &1.9 &1.6 &3.2 &3.2 &2.2 &2.7 &3.2\\
  100 & 0.8 & 2.6 &2.0 &1.5 &0.3 &2.1 &2.0 &2.5 &2.0 &2.1 &3.2 &2.9 &2.5 &3.0 &2.5\\
  250 &1.0  &1.5  &1.8 &1.6 &0.5 &1.7 &1.1 &1.5 &1.2 &1.7 &2.4 &2.3 &2.3 &2.1 &1.7\\
\end{tabular}
    \label{tab:non_linear_results}
\end{table}
\begin{figure}[h]
\centering
\includegraphics[scale=0.5]{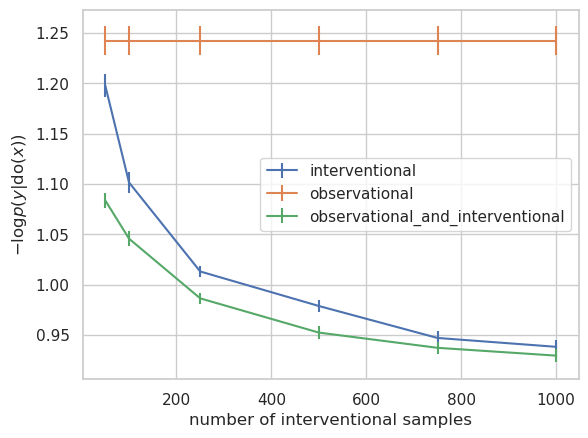}
\includegraphics[scale=0.5]{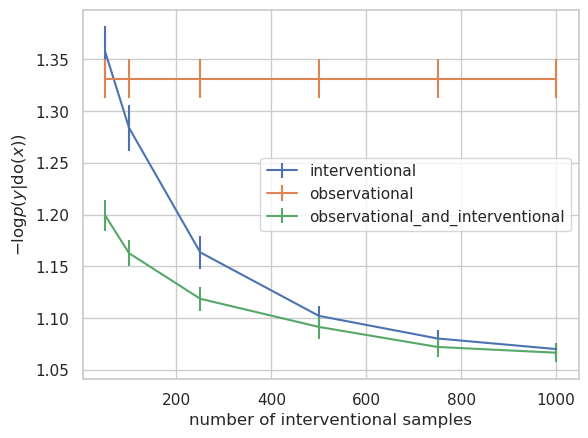}
  \caption{Example comparison for dataset 3 and 15 of a flow model trained with 1000 observational samples, a flow model trained with {50, 100, 250, 500, 750, 1000} interventional samples, and a flow model trained with both 1000 observational samples and {50, 100, 250, 500, 750, 1000} interventional samples. All flow models are evaluated on 1000 interventional samples from the test set.}
\label{fig:example_results}
\end{figure}
In this experiment we are interested in estimating the interventional distribution $p(y|\doit(x))$. For each dataset, we train three variants of our reduced causal model parameterized with normalizing flows. The first flow model is trained using only observational data, see Equation \ref{eq:log_p_obser_final}. The second flow model is trained using only interventional data, see Equation \ref{eq:log_p_inter_final}. The third flow model is trained using observational and interventional data jointly, see Equation \ref{eq:final_loss}. For each of the ten datasets, we keep the number of observational samples constant at 1000 and use an increasing number of interventional samples 50, 100, 250, 500, 750, 1000, resulting in six experiments per dataset. For example, in the case of 50 interventional and 1000 observational samples, the first flow model is trained with 1000 observational samples, the second flow model is trained with 50 interventional samples, and the third flow model is jointly trained with 1000 observational and 50 interventional samples, see Figure \ref{fig:example_results} for an example.

In Appendix \ref{sec:all_results}, we provide extensive visualizations of the results of all experiments, including scatter plots of training data, samples from the trained flow models, negative log-likelihood values for all flow models on the interventional and observational test sets. To summarize our findings, we calculate the ratio of samples required to reach the same performance, measured in averaged negative log-likelihood when only using interventional samples. 

In Table \ref{tab:non_linear_results} we see that we can substantially reduce the number of interventional samples required when using an additional 1000 observational samples in eight of the ten datasets. The results in Table \ref{tab:non_linear_results}, dataset 1 to 10, are in agreement with qualitative results in Appendix \ref{sec:all_results}, where we find that samples from the flow model trained with interventional and observational data better resemble the training data compared to samples from a flow model trained with interventional data only.

\paragraph{With observed confounders}
\label{sec:nonlinear_experiment_with_confounder}
We now consider the case of an additional $L$-dimensional observed confounder $\mathbf{C}$. We use the same setup as in Section \ref{sec:nonlinear_experiment} to simulate triples $(x, y, \mathbf{c})$. We use the following nonlinear causal mechanisms to generate treatment $X$ and outcome $Y$: $X = f(E_X, \mathbf{Z, C})$ and $Y = g(X, E_Y, \mathbf{Z,C})$, a detailed description of the simulation procedure is given in Appendix \ref{sec:data_generation_details}. For each of the five datasets, we keep the number of observational samples constant at 1000 and use an increasing number of interventional samples: 50, 100, 250, 500, 750, 1000, resulting in six experiments per dataset. We compare three flow models trained with observational, interventional, and observational plus interventional data, respectively. The training details are the same as in Section \ref{sec:nonlinear_experiment}. An extensive comparison of the three flow models, as well as visualizations for each dataset, can be found in Appendix \ref{sec:all_results_with_confounder}. The main result of the experiments with additional, observed confounders is the following: For each of the five datasets, we can substantially reduce the required number of interventional samples with our flow model trained with observational and interventional data, see Table \ref{tab:non_linear_results}, dataset 11 to 15. We find that we can reduce the number of required samples by a factor of two to three when training with 1000 additional observational samples.

\section{Conclusion}
We showed that without loss of generality when modeling an unobserved confounder of a treatment and an outcome variable, we may assume that this confounder takes values in the same space as the treatment variable.
Applying this insight to the setting of discrete treatment and outcome, we can easily derive bounds between observational and interventional distributions that can be exploited for estimation purposes.
We pointed out that in certain cases with highly unbalanced observational samples, the accuracy of the causal effect estimate can be improved by incorporating observational data.
Furthermore, in the linear-Gaussian setting, we derived equality constraints between the parameters of the observational and interventional distributions, showing that these distributions are not independent. 
Finally, for the general continuous (possibly nonlinear and non-Gaussian) setting, we proposed a flexible parameterization of the reduced causal model using normalizing flows. 
This parameterization allows training of a single flow model from combined observational and interventional data. 
In simulations, for 13 out of 15 simulated datasets, we could substantially reduce the number of interventional samples if sufficient observational samples are available without sacrificing accuracy. 

Together, our results suggest that there is still untapped potential to obtain more accurate estimates of causal effects by combining observational and interventional data while allowing for latent confounding in the observational regime.
Our work opens up practical applications and further theoretical questions regarding the precise nature of the relationship between observational and interventional distributions in parametric and non-parametric settings. 

Possible future work includes (i) investigating the potential for improved causal effect estimation from combined data in clinical applications, (ii) extending the reduction operation to more than two observed variables, and (iii) applying the flow model to high-dimensional outcome variables, e.g., medical images.

\bibliography{references}
\newpage
\appendix
\section{Causal Reduction theorem}
\label{sec:causal_red_thm}
\begin{theoremEnd}{Thm}[Causal Reduction]\label{thm:causal_reduction}
  Let $\mathcal{M}$ be a causal Bayesian network with observed variables $\mathbf{X} \in \mathcal{X},\mathbf{Y} \in \mathcal{Y}$ and latent variables$,Z_1 \in \mathcal{Z}_1,\dots, Z_K \in \mathcal{Z}_K$ such that $\mathbf{Y}$ is not an ancestor of $\mathbf{X}$.

  Then there exists a causal Bayesian network $\mathcal{M}^*$ with observed variables $\mathbf{X} \in \mathcal{X}$ and $\mathbf{Y}\in\mathcal{Y}$ and a single latent confounder $\mathbf{W}\in\mathcal{X}$ (that takes values in the same space as $\mathbf{X}$) such that $\mathcal{M}^*$ is interventionally equivalent to $\mathcal{M}$ with respect to perfect interventions on the observed variables $\mathbf{X}$ and $\mathbf{Y}$:
  $$p_{\mathcal{M}}(\mathbf{x},\mathbf{y}) = p_{\mathcal{M}^*}(\mathbf{x},\mathbf{y})$$
  $$p_{\mathcal{M}}(\mathbf{x} \mid \doit(\mathbf{y})) = p_{\mathcal{M}^*}(\mathbf{x} \mid \doit(\mathbf{y}))$$
  $$p_{\mathcal{M}}(\mathbf{y} \mid \doit(\mathbf{x})) = p_{\mathcal{M}^*}(\mathbf{y} \mid \doit(\mathbf{x})).$$
\end{theoremEnd}
\begin{proofEnd}
While the main text preceding Theorem~\ref{thm:causal_reduction} already provides a proof, we show here
explicitly that the reduction operation commutes with a perfect intervention on $\mathbf{X}$.

\begin{equation*}\begin{split}
  p(\mathbf{y}|\doit(\mathbf{x})) 
  & = \int_{\mathcal{Z}_1} \dots \int_{\mathcal{Z}_K} p(\mathbf{y}, z_1, \dots, z_K|\doit(\mathbf{x})) dz_1 \dots dz_K \\
  & \stackrel{(a)}{=} \int_\mathcal{Z} p(\mathbf{y,z}|\doit(\mathbf{x})) d\mathbf{z} \\
  & \stackrel{(b)}{=} \int_\mathcal{Z} p(\mathbf{y}|\mathbf{x}, \mathbf{z})p(\mathbf{z}) d\mathbf{z} ,\\
  & \stackrel{(c)}{=} \int_\mathcal{Z} \left( \int_\mathcal{X} p(\mathbf{y}|\mathbf{x}, \mathbf{z})p(\mathbf{z}) p(\mathbf{w}|\mathbf{z}) d\mathbf{w} \right) d\mathbf{z},\\
  & \stackrel{(d,e)}{=} \int_\mathcal{X} \left( \int_\mathcal{Z} p(\mathbf{y}|\mathbf{x}, \mathbf{z})p(\mathbf{w}) p(\mathbf{z}|\mathbf{w}) d\mathbf{z} \right) d\mathbf{w}, \\
  & \stackrel{(\ref{eq:red-7})}{=} \int_\mathcal{X} p(\mathbf{y}|\mathbf{x}, \mathbf{w})p(\mathbf{w})d\mathbf{w}\,.
\end{split}\end{equation*}
\end{proofEnd}
We call the causal Bayesian network $\mathcal{M}^*$ a \emph{causal reduction} of $\mathcal{M}$ since it will typically be the case that the latent space will be reduced, while the causal semantics are preserved by construction. The single latent confounder $\mathbf{Z}$ in $\mathcal{M}^*$ will then more parsimoniously represent the causal influence of \emph{all} latent confounders of $\mathbf{X}$ and $\mathbf{Y}$ in $\mathcal{M}$. 

From this result, it also follows that a reduced causal Bayesian network can be constructed from a given acyclic structural causal model.
Indeed, this is immediate since any acyclic structural causal model can also be interpreted as a causal Bayesian network with latent variables. 
Extending the derivation to \emph{simple} structural causal models (a convenient class of structural causal models that can represent causal cycles, such as feedback loops \citep{Bongers++_AOS_21}) is straightforward, as long as $\mathbf{X}$ and $\mathbf{Y}$ are not part of a causal cycle (although the other variables might be involved in cycles).
Even more generally, we can also start with given potential outcomes and obtain a reduced causal Bayesian network. 
This is shown explicitly in Appendix~\ref{app:potential_outcomes}.

\section{Reduction with observed confounders}
\label{sec:reduction_with_confounders}

There are many scenarios where we are interested in estimating the conditional causal effect of interventions given additional covariates $\mathbf{C}$ that might confound treatment and outcome, for example when estimating the efficacy of a vaccine depending on age. We again consider a treatment variable $\mathbf{X} \in \mathcal{X}$, an outcome variable $\mathbf{Y} \in \mathcal{Y}$, and a set of $K$ latent confounders $Z_1,\dots,Z_K$ in arbitrary standard measurable spaces $\mathcal{Z}_1,\dots,\mathcal{Z}_K$. In addition, let there be $L$ observed confounders $C_1, \dots, C_L$ of $\mathbf{X}$ and $\mathbf{Y}$, again in arbitrary standard measurable spaces. We allow for arbitrary causal relations and dependencies between the confounders. In the following, we summarize all observed confounders using a single variable $\mathbf{C} = (C_1,\dots,C_L) \in \mathcal{C}$ and all latent confounders as $\mathbf{Z} = (Z_1,\dots,Z_K) \in \mathcal{Z}$. We follow a similar sequence of steps as in Section \ref{sec:reduction} to derive a reduced causal model of the following form
\begin{align}
    \label{eq:reduction_with_confounder}
    p(\mathbf{x,y,w, c}) = p(\mathbf{y}|\mathbf{x},\mathbf{w}, \mathbf{c})  p(\mathbf{x|w})  p(\mathbf{w|c})  p(\mathbf{c}).
\end{align}
as illustrated in Figure~\ref{fig:graphical_explanation_with_confounders} (a--d).
At every step, the Bayesian network is observationally equivalent to the ones before and also interventionally equivalent for perfect interventions on any subset of $\{\mathbf{X},\mathbf{Y}\}$.

Specializing now to continuous treatment $\mathbf{X} \in \mathcal{X} = \mathbb{R}^M$ and outcome $\mathbf{Y} \in \mathcal{Y} = \mathbb{R}^N$,
we can use a similar approach as in Section \ref{sec:add_latents}, and in addition marginalize out $\mathbf{W}$ as seen in Figure \ref{fig:graphical_explanation_with_confounders} (g), 
to convert the causal Bayesian Network into an SCM with structural equations of the form given below
\begin{align}
  \mathbf{X} & = F(\mathbf{U}, \mathbf{C}), \label{eq:scm_X}\\
    \mathbf{Y} & = G(\mathbf{V}, \mathbf{X},\mathbf{U}, \mathbf{C}),  \label{eq:scm_Y}
\end{align}
where $\mathbf{U} \sim \mathcal{N}(\mathbf{0, I}_M)$, $\mathbf{V} \sim \mathcal{N}\left(\mathbf{0, I}_N\right)$, $\mathbf{U} \Indep \mathbf{V}$ and $F$ and $G$ are two deterministic maps. 
This is illustrated in Figure~\ref{fig:graphical_explanation_with_confounders} (e--h).
Again, at every step, the Bayesian network is observationally equivalent to the ones before and also interventionally equivalent for perfect interventions on any subset of $\{\mathbf{X},\mathbf{Y}\}$.

\begin{figure}[h]
\Large
\begin{subfigure}
\centering
\scalebox{0.6}{\begin{tikzpicture}
\node [node](z2) {$Z_2$};
\node [node, left = of z2](z1) {$Z_1$};
\node [nodeobserved, right = of z2](zd) {$C_1$};
\node [text width=3cm, left = of zd, xshift=3.4cm](dots) {$\mathbf{\dots}$};
\node [node, below right = of z2, xshift=-0.65cm](z3) {$Z_K$};
\node [nodeobserved, below left = of z2, xshift=0.65cm](z4) {$C_L$};
\node [nodeobserved, below = of z3](x) {$\mathbf{Y}$};
\node [nodeobserved, below = of z4](y) {$\mathbf{X}$};

\draw [arrow] (z1) -- (z2);
\draw [arrow] (z1) -- (z3);
\draw [arrow] (z2) -- (z3);
\draw [arrow] (z3) -- (z4);
\draw [arrow] (z1) -- (z4);
\draw [arrow] (zd) -- (z4);
\draw [arrow] (zd) -- (z3);
\draw [arrow] (z4) -- (y);
\draw [arrow] (z4) -- (x);
\draw [arrow] (z3) -- (x);
\draw [arrow] (y) -- (x);
\draw [arrow] (zd) to [out=-90,in=45] (x);
\end{tikzpicture}}
\end{subfigure}
\begin{subfigure}
\centering
\scalebox{0.6}{\begin{tikzpicture}
\node [node](z) {$\mathbf{Z}$};
\node [nodeobserved, right = of z](c) {$\mathbf{C}$};
\node [nodeobserved, below = of z](x) {$\mathbf{X}$};
\node [nodeobserved, below = of c](y) {$\mathbf{Y}$};

\draw [arrow] (c) -- (z);
\draw [arrow] (c) -- (x);
\draw [arrow] (c) -- (y);
\draw [arrow] (z) -- (y);
\draw [arrow] (z) -- (x);
\draw [arrow] (x) -- (y);
\end{tikzpicture}}
\end{subfigure}
\begin{subfigure}
\centering
\scalebox{0.6}{\begin{tikzpicture}
\node [node](z) {$\mathbf{Z}$};
\node [node, left = of z](w) {$\mathbf{W}$};
\node [nodeobserved, right = of z](c) {$\mathbf{C}$};
\node [nodeobserved, double, below left = of z, yshift=-0.25cm, xshift=0.65cm](x) {$\mathbf{X}$};
\node [nodeobserved, below right = of z, yshift=-0.25cm, xshift=-0.65cm](y) {$\mathbf{Y}$};

\draw [arrow] (c) -- (z);
\draw [arrow] (c) -- (y);
\draw [arrow] (z) -- (y);
\draw [arrow] (z) -- (w);
\draw [arrow] (w) -- (x);
\draw [arrow] (x) -- (y);
\draw [arrow] (c) to [out=150,in=30] (w);
\end{tikzpicture}}
\end{subfigure}
\begin{subfigure}
\centering
\scalebox{0.6}{\begin{tikzpicture}
\node [node](z) {$\mathbf{Z}$};
\node [node, left = of z](w) {$\mathbf{W}$};
\node [nodeobserved, right = of z](c) {$\mathbf{C}$};
\node [nodeobserved, double, below left = of z, yshift=-0.25cm, xshift=0.65cm](x) {$\mathbf{X}$};
\node [nodeobserved, below right = of z, yshift=-0.25cm, xshift=-0.65cm](y) {$\mathbf{Y}$};

\draw [arrow] (c) -- (z);
\draw [arrow] (c) -- (y);
\draw [arrow] (z) -- (y);
\draw [arrow] (w) -- (z);
\draw [arrow] (w) -- (x);
\draw [arrow] (x) -- (y);
\draw [arrow] (c) to [out=150,in=30] (w);
\end{tikzpicture}}
\end{subfigure}
\begin{subfigure}
\centering
\scalebox{0.6}{\begin{tikzpicture}
\node [node](w) {$\mathbf{W}$};
\node [nodeobserved, right = of w](c) {$\mathbf{C}$};
\node [nodeobserved, double, below = of w](x) {$\mathbf{X}$};
\node [nodeobserved, below = of c](y) {$\mathbf{Y}$};

\draw [arrow] (c) -- (w);
\draw [arrow] (c) -- (y);
\draw [arrow] (w) -- (y);
\draw [arrow] (w) -- (x);
\draw [arrow] (x) -- (y);
\end{tikzpicture}}
\end{subfigure}
\begin{subfigure}
\centering
\scalebox{0.6}{\begin{tikzpicture}
\node [node,double](w) {$\mathbf{W}$};
\node [node,left = of w](u) {$U$};
\node [nodeobserved, right = of w](c) {$\mathbf{C}$};
\node [nodeobserved, double, below = of w](x) {$\mathbf{X}$};
\node [nodeobserved, below = of c](y) {$\mathbf{Y}$};

\draw [arrow] (u) -- (w);
\draw [arrow] (c) -- (w);
\draw [arrow] (c) -- (y);
\draw [arrow] (w) -- (y);
\draw [arrow] (w) -- (x);
\draw [arrow] (x) -- (y);
\end{tikzpicture}}
\end{subfigure}
\begin{center}
\begin{subfigure}
\centering
\scalebox{0.6}{\begin{tikzpicture}
\node [node](u) {$U$};
\node [nodeobserved, right = of u](c) {$\mathbf{C}$};
\node [nodeobserved, double, below = of u](x) {$\mathbf{X}$};
\node [nodeobserved, below = of c](y) {$\mathbf{Y}$};

\draw [arrow] (u) -- (x);
\draw [arrow] (u) -- (y);
\draw [arrow] (c) -- (x);
\draw [arrow] (c) -- (y);
\draw [arrow] (x) -- (y);
\end{tikzpicture}}
\end{subfigure}
\begin{subfigure}
\centering
\scalebox{0.6}{\begin{tikzpicture}
\node [node](u) {$U$};
\node [nodeobserved, right = of u](c) {$\mathbf{C}$};
\node [node,right = of c](v) {$V$};
\node [nodeobserved, double, below = of u](x) {$\mathbf{X}$};
\node [nodeobserved, double, below = of c](y) {$\mathbf{Y}$};

\draw [arrow] (u) -- (y);
\draw [arrow] (u) -- (x);
\draw [arrow] (c) -- (x);
\draw [arrow] (c) -- (y);
\draw [arrow] (x) -- (y);
\draw [arrow] (v) -- (y);
\end{tikzpicture}}
\end{subfigure}
\end{center}
    \caption[A graphical explanation of our reduction technique in the presence of both observed and latent confounders.]{A graphical explanation of our reduction technique in the presence of both observed and latent confounders. (a) We assume a treatment variable $\mathbf{X}\in\mathcal{X}$, an outcome variable $\mathbf{Y}\in\mathcal{Y}$, latent confounders $Z_1,\dots,Z_K$, and observed confounders $C_1,\dots,C_L$, with arbitrary causal and probabilistic relations between the confounders. (b) We combine the latent confounders into $\mathbf{Z} \in \mathcal{Z}$ and the observed confounders into $\mathbf{C} \in \mathcal{C}$, and factorize their joint distribution as $p(\mathbf{z} \mid \mathbf{c}) p(\mathbf{c})$. (c) We create a copy of $\mathbf{X}$ called $\mathbf{W}$. (d) We refactorize $p(\mathbf{w,z,c})$ as $p(\mathbf{z}\mid\mathbf{w},\mathbf{c})p(\mathbf{w}\mid \mathbf{c})p(\mathbf{c})$. (e) We marginalize over $\mathbf{Z}$. (f) We reparameterize $p(\mathbf{w}\mid \mathbf{c})$ using Theorem \ref{gaus_plus_map} as a deterministic function, introducing an independent noise variable $\mathbf{U}$. (g) We marginalize over $\mathbf{W}$. (h) We reparameterize $p(\mathbf{y} \mid \mathbf{x},\mathbf{u},\mathbf{c})$ with Theorem \ref{gaus_plus_map} as a deterministic function, introducing an independent noise variable $\mathbf{V}$. Note that at every step (a--h) the models remain observationally equivalent (i.e., $p(\mathbf{c,x,y})$ is invariant), and also interventionally equivalent with respect to perfect interventions on $\mathbf{X}$ and $\mathbf{Y}$ (i.e., $p(\mathbf{y},\mathbf{c}\mid\doit(\mathbf{x}))$ and $p(\mathbf{x},\mathbf{c}\mid\doit(\mathbf{y}))$ are invariant).}
    \label{fig:graphical_explanation_with_confounders}
\end{figure}
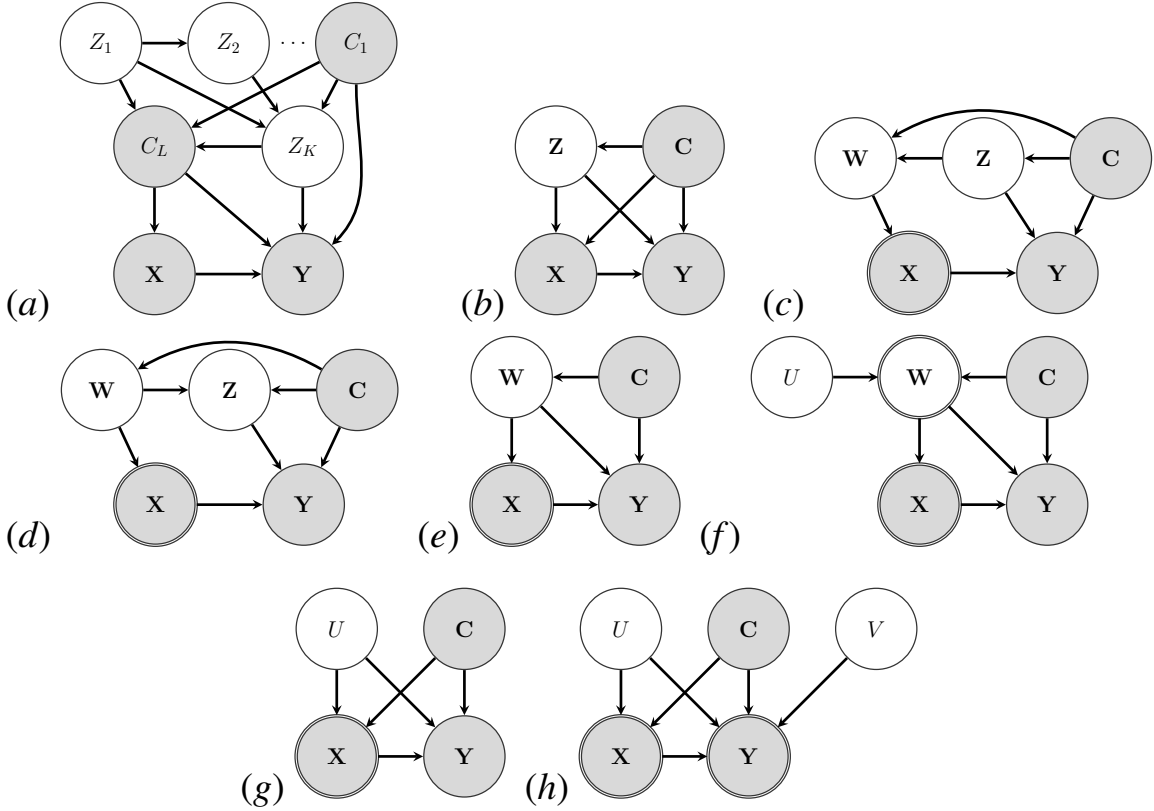

\section{Bounds on the causal effect}
\label{sec:manski}
Let us consider treatment variable $X \in \C{X}$, outcome variable $Y \in \C{Y}$, observed confounder $\mathbf{C} \in \C{C}$ and our surrogate latent confounder $W \in \C{X}$ that was obtained by the causal reduction operation. 
Assume that $\C{X}$, $\C{Y}$ and $\C{C}$ are finite or countably infinite.
For simplicity, we start with the case without observed covariates $\B{C}$.
In the reduced causal model, we obtain the following expression for the causal effect of $X$ on $Y$ (i.e., the interventional distributions) from Equation~\ref{eq:red-6}:
\begin{equation}\label{eq:p_y_do_x_reduced}
  p(y \mid \doit(x)) = \sum_{w \in \C{X}} p(w) p(y \mid x,w),
\end{equation}
while for the observational distribution we obtain:
\begin{equation}\label{eq:p_x_y_reduced}\begin{split}
  p(x, y) &= \sum_{w \in \C{X}} p(w) p(x \mid w) p(y \mid x,w) \\
  &= \sum_{w \in \C{X}} p(w)  \delta_{w}(x) p(y \mid x,w) \\
  &= p(W=x) p(y \mid X=x,W=x).
\end{split}\end{equation}

From this we immediately obtain bounds relating the observational and interventional distributions. 
Splitting the sum in \eref{eq:p_y_do_x_reduced} and substituting \eref{eq:p_x_y_reduced}, we get:
\begin{equation*}\begin{split}
  p(y \mid \doit(x)) 
  & = \sum_{w \in \C{X}} p(w) p(y \mid x,w) \\
  & = p(W=x) p(y \mid X=x,W=x) + \sum_{\substack{w \in \C{X}\\ w \ne x}} p(w) p(y \mid x,w) \\
  & = p(x,y) + \sum_{\substack{w \in \C{X}\\ w \ne x}} p(w) p(y \mid x,w).
\end{split}\end{equation*}
Since $0 \le p(w) \le 1$ and $0 \le p(y \mid x,w) \le 1$, we can bound
  $$0 \le \sum_{\substack{w \in \C{X}\\ w \ne x}} p(w) p(y \mid x,w) \le \sum_{\substack{w \in \C{X}\\ w \ne x}} p(w) = 1 - p(W=x) = 1 - p(x),$$
and we conclude
  \begin{equation}\label{eq:discrete_causal_effect_bound}
    p(x,y) \le p(y \mid \doit(x)) \le p(x,y) + 1 - p(x)
  \end{equation}
for all $x \in \C{X}$, $y \in \C{Y}$.
This means that the interventional distributions corresponding to the causal effect of $X$ on $Y$ are bounded by the observational distribution of $X$ and $Y$.
For binary treatment and outcome, this bound is not novel, as it was already derived in \cite{ManskiNagin:98} 
and can also be obtained from the instrumental inequality of \cite{balke_bounds_1997} under the assumption of perfect compliance.

In the presence of observed confounders $\B{C}$, one similarly derives the bound
  \begin{equation}\label{eq:discrete_causal_effect_bound_conditional}
    p(x,y|\B{c}) \le p(y \mid \doit(x),\B{c}) \le p(x,y|\B{c}) + 1 - p(x|\B{c})
  \end{equation}
for all $x \in \C{X}$, $y \in \C{Y}$, $\B{c} \in \C{C}$.

\section{Figures of bounds}
\label{sec:fig_bounds}
Illustration of the bounds in Equation~\ref{eq:bound_binary_x0} and Equation~\ref{eq:bound_binary_x1} are shown in Figure \ref{fig:bounds}.
\begin{figure}[h]
  \includegraphics[width=0.48\textwidth]{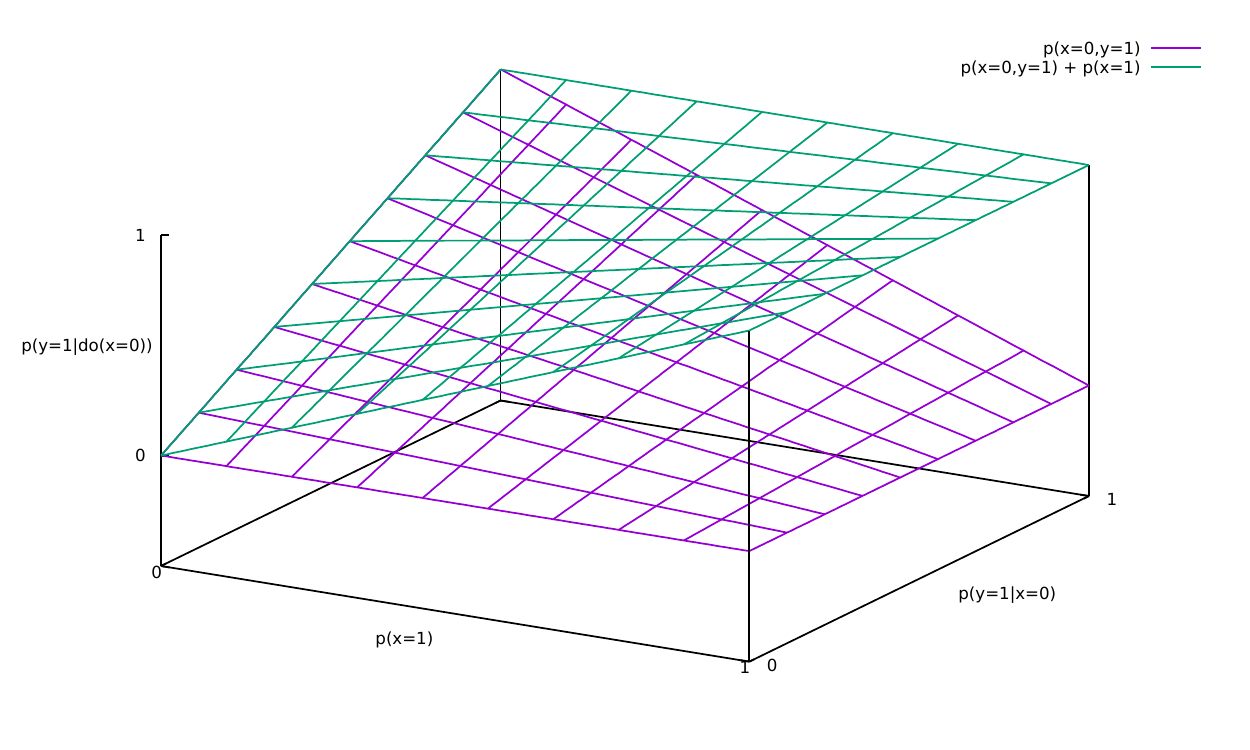}\hfill\includegraphics[width=0.48\textwidth]{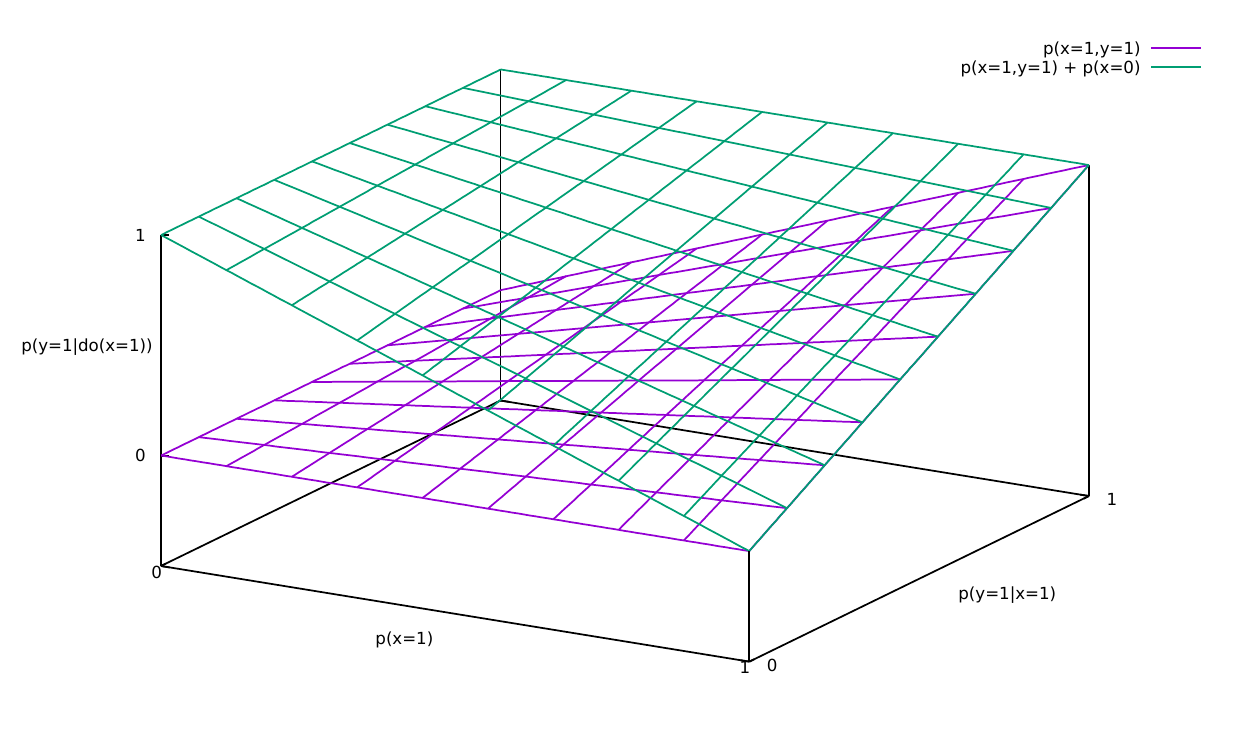}
  \caption{Illustration of the bounds in Equation~\ref{eq:bound_binary_x0} (left) and Equation~\ref{eq:bound_binary_x1} (right) of the interventional distributions in terms of the observational distribution.\label{fig:bounds}}
\end{figure}

\section{Limitations of the results in Section 4.3}
\label{sec:limitations}
An important remark, though, is that in this way we do not control for a placebo effect.
If one is specifically interested in comparing the effect of treatment with the drug with treatment with a placebo, it seems that the observational data (in which individuals were not even treated with a placebo) have nothing to offer in this scenario.
On the other hand, if the aim is to compare the effect of treatment with the drug to no treatment at all, one can save out on the control group in the RCT. 
Indeed, in this scenario there is simply no room for potential unobserved confounding in the observational data to bias the estimate.
In practice, when applying this idea, one should make sure that the observational data stem from a similar population as the treatment group. 

More generally, if the drug is not new, and not everyone in the observational data is untreated, but only a small fraction is treated, the observational data may still allow for a more accurate estimate of $p(y=1\mid\doit(x=0))$ than the interventional data if the number of interventional samples with $x=0$ is small. 
A similar situation can arise if only a small fraction of the observational population were \emph{un}treated. 
In general, the approximate ML estimator in Equation~\ref{eq:approx_ML_estimator} can be used as long as the observational data size is large enough. 
Even more generally, one can use the exact ML estimator, which can be computed numerically. 
In that case, it is not even necessary to assume that the observational data set is much larger than the interventional data set for obtaining an estimate of the causal effect based on a non-trivial combination of observational and interventional data.

\section{Proofs}\label{sec:proofs}

\begin{Thm}
\label{cdf-ind}
Let $Y$ be a `conditional' random variable with Markov kernel $P(Y|\mathbf{X})$ that takes values in 
$\mathbb{R}$ (or $[-\infty,\infty]$) and whose input $\mathbf{X}$ has values in any measurable space (e.g.\ $\mathbb{R}^M$).
Then there exists a uniformly distributed variable $E\sim U[0,1]$ that is independent of $\mathbf{X}$ and a deterministic function $F$, namely the conditional quantile function of $Y$ given $\mathbf{X}$, such that:
\begin{align}
    Y = F(E|\mathbf{X}) \quad \text{a.s.}  
\end{align}
\end{Thm}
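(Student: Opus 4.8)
This is the conditional form of the randomized probability-integral-transform (``noise outsourcing'') lemma; the plan is to build $E$ explicitly from the conditional cumulative distribution function and then reduce every distributional claim to the classical unconditional statement by conditioning on $\mathbf{X}$.

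First I would fix a regular version of the conditional law. Since $\mathbb{R}$ (and $[-\infty,\infty]$) is a standard Borel space, the Markov kernel $P(Y|\mathbf{X})$ yields a jointly measurable conditional distribution function $G(y|\mathbf{x}) := P(Y \le y \mid \mathbf{X}=\mathbf{x})$ which, for each $\mathbf{x}$, is nondecreasing and right-continuous in $y$; let $G(y-|\mathbf{x})$ denote its left limit. The conditional quantile function is $F(u|\mathbf{x}) := \inf\{ y : G(y|\mathbf{x}) \ge u\}$ for $u \in (0,1)$ (with $\inf\emptyset := +\infty$, which is why the extended line appears), the left-continuous generalized inverse; one checks it is jointly measurable in $(u,\mathbf{x})$ and satisfies the Galois relation $F(u|\mathbf{x}) \le y \iff u \le G(y|\mathbf{x})$.

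Next I would enlarge the probability space by an auxiliary $V \sim U[0,1]$ with $V \Indep (\mathbf{X},Y)$ and define the randomized distributional transform
\[
  E := G(Y-\mid\mathbf{X}) \;+\; V\,\bigl( G(Y\mid\mathbf{X}) - G(Y-\mid\mathbf{X}) \bigr),
\]
which spreads the mass of any atom of $G(\cdot|\mathbf{x})$ at $Y$ uniformly over the associated jump interval. Conditioning on $\mathbf{X}=\mathbf{x}$: then $Y$ has law $G(\cdot|\mathbf{x})$, $V$ is still $U[0,1]$ and independent of $Y$, so the classical (unconditional) distributional-transform lemma gives $E \mid \mathbf{X}=\mathbf{x} \sim U[0,1]$ and $F(E|\mathbf{x}) = Y$ a.s., for almost every $\mathbf{x}$. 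Since the conditional law of $E$ given $\mathbf{X}$ equals $U[0,1]$ and does not depend on $\mathbf{x}$, we conclude simultaneously that $E \sim U[0,1]$ and $E \Indep \mathbf{X}$; integrating the identity over $\mathbf{x}$ gives $Y = F(E|\mathbf{X})$ almost surely.

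I expect the main obstacle to be the careful handling of atoms of the conditional distribution — the non-injectivity of $G(\cdot|\mathbf{x})$ — which is precisely what forces the randomization by $V$, together with the measurability bookkeeping that $G$, $G(\cdot-|\cdot)$ and $F$ are jointly measurable so that $E$ and $F(E|\mathbf{X})$ are genuine random variables; once those are in place, reducing to the unconditional lemma by conditioning on $\mathbf{X}=\mathbf{x}$ is routine. As an alternative to the explicit construction, one could simply invoke the noise-outsourcing lemma from the measure-theoretic sources cited in the paper.
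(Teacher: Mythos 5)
Your proposal is correct and follows essentially the same route as the paper: your randomized distributional transform $E = G(Y-\mid\mathbf{X}) + V\,(G(Y\mid\mathbf{X}) - G(Y-\mid\mathbf{X}))$ is exactly the paper's interpolated conditional cdf $G(Y;U\mid\mathbf{X}) = P(Y<y\mid\mathbf{x}) + U\cdot P(Y=y\mid\mathbf{x})$ evaluated at the pair $(Y,\mathbf{X})$, and both arguments then verify uniformity, independence of $\mathbf{X}$, and the a.s.\ inversion via the conditional quantile function (the paper delegating the detailed verification to the cited measure-theoretic reference). Your explicit discussion of atoms and joint measurability fills in precisely the bookkeeping the paper leaves to that reference.
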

\begin{proof}
Consider the interpolated conditional cumulative distribution function of $Y$ given $\mathbf{X}$ with $u \in [0,1]$:
\begin{align}
    G(y;u|\mathbf{x}) &:= P(Y < y|\mathbf{x}) + u \cdot P(Y=y|\mathbf{x}).
\end{align}
Furthermore, consider the conditional quantile function (cqf) of $Y$ given $\mathbf{X}$ with $e \in [0,1]$:
\begin{align}  
    F(e|\mathbf{x}) &:= \inf \{ \tilde{y} \in \mathbb{R}\,|\, G(\tilde{y};1|\mathbf{x}) \ge e \}.
\end{align}
Then take any uniformly distributed random variable $U \sim U[0,1]$ independent of $(Y,\mathbf{X})$ and define:
\begin{align}
    E &:= G(Y;U|\mathbf{X}),
\end{align}
where we plugged $Y$, $U$ and $\mathbf{X}$ into $G$.
Then one can check using standard arguments for cdf and cqf that $E$ is uniformly distributed, $E \sim U[0,1]$, which is independent of the value $\mathbf{x}$ of  $\mathbf{X}$. Furthermore, one can show that:
\begin{align}
    Y = F(E|\mathbf{X}) \quad \text{a.s.}  
\end{align}
A detailed proof can be found in \cite{Forre2021} in Appendix G.
\end{proof}

\printProofs

\section{Reduction starting from potential outcomes}\label{app:potential_outcomes}

Theorem~\ref{thm:causal_reduction} states that one can construct a reduced causal Bayesian network from a given causal Bayesian
network of a certain form. Here, we show that one can also obtain such a reduced causal Bayesian network when starting instead
from random variables
$\xi : \Omega \to \mathcal{X}$ (the observational treatment) and the potential outcomes 
$\eta : \Omega \to \mathcal{Y}^{\mathcal{X}}$.
Here, we consider $\xi$ to represent the treatment in the observational regime, while $\eta$ is a random function, with $\eta(\mathbf{x}) \in \mathcal{Y}$ the potential outcome in the regime under treatment $\mathbf{x} \in \mathcal{X}$.
Note that we do not need a random variable corresponding to the outcome in the observational regime, since this is just $\eta(\xi)$ by the consistency assumption.
We will here limit ourselves to considering a finite space $\mathcal{X}$ (that is, finitely many possible treatments) 
for simplicity of exposition.\footnote{Extending this to infinite spaces $\mathcal{X}$ is possible, but more mathematical machinery is needed in order to deal with the measure-theoretic subtleties, see for example \citep{Forre2021b}.}
In that case, one can also think of $\eta$ as a tuple $\eta = (\eta(\mathbf{x}_1), \dots, \eta(\mathbf{x}_n))$ for $\mathcal{X} = \{\mathbf{x}_1,\dots,\mathbf{x}_n\}$.

\begin{Cor}[Causal Reduction From Potential Outcomes]\label{Cor:causal_reduction_PO}
  Let $\xi$ be a random variable taking values in $\mathcal{X}$ and $\eta$ a random function $\mathcal{X} \to \mathcal{Y}$, where $\mathcal{X}$ is a finite space and $\mathcal{Y}$ a standard measurable space.
  Then there exists a causal Bayesian network $\mathcal{M}^*$ with observed variables $\mathbf{X} \in \mathcal{X}$ and $\mathbf{Y}\in\mathcal{Y}$ and a single latent confounder $\mathbf{W}\in\mathcal{X}$ (that takes values in the same space as $\mathbf{X}$), with graph as in Figure~\ref{fig:graphical_explanation} (b), and such that $\mathcal{M}^*$ entails the same observational and interventional distributions as encoded by $\xi$ and $\eta$, that is,
  $$p_{\mathcal{M}^*}(\mathbf{x},\mathbf{y}) = p_{\xi,\eta(\xi)}(\mathbf{x},\mathbf{y})$$
  $$p_{\mathcal{M}^*}(\mathbf{y} \mid \doit(\mathbf{x})) = p_{\eta(\mathbf{x})}(\mathbf{y})$$
where $p_{\xi,\eta(\xi)}$ denotes the joint distribution of observational treatment $\xi$ and outcome $\eta(\xi)$, and $p_{\eta(\mathbf{x})}$ the distribution of the potential outcome $\eta(\mathbf{x})$.
\end{Cor}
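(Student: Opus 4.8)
The plan is to reduce this corollary to Theorem~\ref{thm:causal_reduction} by first manufacturing a causal Bayesian network from the potential-outcomes data $(\xi,\eta)$, and then applying the causal reduction already established. Since $\mathcal{X} = \{\mathbf{x}_1,\dots,\mathbf{x}_n\}$ is finite, the random function $\eta$ can be identified with the tuple $(\eta(\mathbf{x}_1),\dots,\eta(\mathbf{x}_n)) \in \mathcal{Y}^n$, and together with $\xi$ this yields a well-defined joint distribution of $(\xi,\eta(\mathbf{x}_1),\dots,\eta(\mathbf{x}_n))$ on $\mathcal{X}\times\mathcal{Y}^n$. I would then build a causal Bayesian network $\mathcal{M}$ whose latent variables are $Z_1 := \eta$ (the whole response-function tuple, taking values in the standard measurable space $\mathcal{Y}^n$) and whose observed variables are $\mathbf{X}$ and $\mathbf{Y}$, with $\mathbf{X} := \xi$, the kernel $p(\mathbf{x}\mid z_1)$ being simply the marginal law of $\xi$ given $\eta$, and the kernel $p(\mathbf{y}\mid\mathbf{x},z_1) := \delta_{z_1(\mathbf{x})}(\mathbf{y})$, i.e.\ $\mathbf{Y}$ is the deterministic evaluation of the response function at the realized treatment. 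Here $\mathbf{Y}$ is not an ancestor of $\mathbf{X}$ by construction, so the hypothesis of Theorem~\ref{thm:causal_reduction} is met.

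The second step is to check that this $\mathcal{M}$ reproduces exactly the distributions encoded by $(\xi,\eta)$. For the observational distribution, marginalizing over $z_1 = \eta$ gives $p_\mathcal{M}(\mathbf{x},\mathbf{y}) = \int \delta_{\eta(\mathbf{x})}(\mathbf{y})\, p(\mathbf{x}\mid\eta)\, p(\eta)\,d\eta$, which is precisely the law of $(\xi,\eta(\xi))$ by the consistency assumption. For the interventional distribution, $p_\mathcal{M}(\mathbf{y}\mid\doit(\mathbf{x}))$ is obtained by the truncated factorization, deleting the kernel for $\mathbf{X}$ and evaluating at the intervened value: $p_\mathcal{M}(\mathbf{y}\mid\doit(\mathbf{x})) = \int \delta_{\eta(\mathbf{x})}(\mathbf{y})\, p(\eta)\,d\eta = p_{\eta(\mathbf{x})}(\mathbf{y})$, the law of the potential outcome at $\mathbf{x}$. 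Both identities are essentially bookkeeping once the kernels are set up correctly.

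Finally, I would invoke Theorem~\ref{thm:causal_reduction} applied to $\mathcal{M}$: it furnishes a causal Bayesian network $\mathcal{M}^*$ with observed variables $\mathbf{X}\in\mathcal{X}$, $\mathbf{Y}\in\mathcal{Y}$, a single latent confounder $\mathbf{W}\in\mathcal{X}$, graph as in Figure~\ref{fig:graphical_explanation}~(b), that is interventionally equivalent to $\mathcal{M}$ with respect to perfect interventions on $\mathbf{X}$ and $\mathbf{Y}$. Combining this with the two identities from the previous step gives $p_{\mathcal{M}^*}(\mathbf{x},\mathbf{y}) = p_{\mathcal{M}}(\mathbf{x},\mathbf{y}) = p_{\xi,\eta(\xi)}(\mathbf{x},\mathbf{y})$ and $p_{\mathcal{M}^*}(\mathbf{y}\mid\doit(\mathbf{x})) = p_{\mathcal{M}}(\mathbf{y}\mid\doit(\mathbf{x})) = p_{\eta(\mathbf{x})}(\mathbf{y})$, which is the claim. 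The main obstacle, and the only place where real care is needed, is verifying that $\mathcal{Y}^n$ is a standard measurable space (so that the machinery behind Theorem~\ref{thm:causal_reduction} applies) and that the evaluation map $(z_1,\mathbf{x})\mapsto z_1(\mathbf{x})$ together with the kernel $\delta_{z_1(\mathbf{x})}(\mathbf{y})$ is jointly measurable; for finite $\mathcal{X}$ this is routine since $\mathcal{Y}^n$ is a finite product of standard spaces and the evaluation reduces to a finite case distinction on $\mathbf{x}$, but it is worth stating explicitly. (For infinite $\mathcal{X}$ one would need the more delicate treatment referenced in the footnote, which is why the statement is restricted to finite $\mathcal{X}$.)
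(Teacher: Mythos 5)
Your proposal is correct and follows essentially the same route as the paper: construct an intermediate causal Bayesian network realizing the observational and interventional distributions encoded by $(\xi,\eta)$, verify the two identities, and then invoke Theorem~\ref{thm:causal_reduction}. The only (immaterial) difference is that the paper takes the latent confounder to be the pair $Z=(\xi,\eta)$ with both kernels deterministic, whereas you take the latent to be $\eta$ alone and let $p(\mathbf{x}\mid\eta)$ be the conditional law of $\xi$ given $\eta$; both constructions yield the same reduced network.
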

\begin{proof}
  We will construct a causal Bayesian network $\mathcal{M}$ with the graph depicted in Figure~\ref{fig:graphical_explanation} (b), that is, with a treatment variable $\mathbf{X}$, an outcome variable $\mathbf{Y}$, and a latent confounder $\mathbf{Z}$ such that it encodes the same distributions as $\xi$ and $\eta$, that is, 
  \begin{equation}\label{eq:construction_CBN_from_PO}
    \begin{cases}
      p_{\mathcal{M}}(\mathbf{x},\mathbf{y}) = p_{\xi,\eta(\xi)}(\mathbf{x},\mathbf{y}) & \\
      p_{\mathcal{M}}(\mathbf{y} \mid \doit(\mathbf{x})) = p_{\eta(\mathbf{x})}(\mathbf{y}). & \\
    \end{cases}
  \end{equation}
  We take $Z := (\xi, \eta) \in \mathcal{Z} := \mathcal{X} \times \mathcal{Y}^{\mathcal{X}}$ as the latent confounder.
  We can then define $p(\mathbf{x} \mid \mathbf{z}) := \delta_{\mathbf{z}_1}$, the delta measure centered on the first component of $\mathbf{z}$, and $p(\mathbf{y} \mid \mathbf{z}, \mathbf{x}) := \delta_{\mathbf{z}_2(\mathbf{x})}$, the delta measure centered on the second component of $\mathbf{z}$, evaluated in $\mathbf{x}$.
  It is straightforward to check that this causal Bayesian network entails Equation~\ref{eq:construction_CBN_from_PO}.

  We now apply Theorem~\ref{thm:causal_reduction} to $\mathcal{M}$ to obtain a causal Bayesian network $\mathcal{M^*}$ with a single latent confounder taking values in $\mathcal{X}$ that does the job.
  The surrogate confounder $\mathbf{W}$ constructed in the reduced causal Bayesian network has distribution $p_{\mathcal{M}^*}(\mathbf{w}) = p_{\xi}(\mathbf{w})$, while the Markov kernels for $\mathbf{X}$ and $\mathbf{Y}$ are respectively 
  $p_{\mathcal{M}^*}(\mathbf{x} \mid \mathbf{w}) = \delta_{\mathbf{w}}(\mathbf{x})$ and
  $p_{\mathcal{M}^*}(\mathbf{y} \mid \mathbf{x},\mathbf{w}) = p_{\eta(\mathbf{x}) \mid \xi}(\mathbf{y} \given \mathbf{w})$.
\end{proof}

This reduction offers a more parsimonious parameterization of the observational and interventional distributions, 
since $p(\mathbf{w})$ and $p(\mathbf{y}|\mathbf{w},\mathbf{x})$ together are $(|\mathcal{X}|-1) + (|\mathcal{Y}|-1)|\mathcal{X}|^2$ dimensional, while
$p(\xi,\eta)$ is $|\mathcal{X}|\,|\mathcal{Y}|^{\mathcal{X}}-1$ dimensional.
The dimensionality can be further reduced to $(|\mathcal{X}|-1) + 2 (|\mathcal{Y}|-1) |\mathcal{X}|$ at the cost of introducing constraints between the parameters (see Section~\ref{sec:binary:bounds}).

\section{Normalizing Flows}
\label{sec:flows}
\subsection{Background}
Normalizing flows are based on the idea of transforming samples from a simple distribution into samples from a complex distribution using the change of variable formula \citep{rezende_variational_2016, tabak_family_2013}:
\begin{align}
    p(\mathbf{x}) = p_\mathbf{Z}(f(\mathbf{x}))\left\lvert \det \left(\frac{\partial f(\mathbf{x})}{\partial \mathbf{x}}\right) \right\lvert,
    \label{eq:flow}
\end{align}
where $\mathbf{z} = f(\mathbf{x})$ is a bijective map $f: \mathcal{X} \rightarrow \mathcal{Z}$, $p_\mathbf{Z}(\mathbf{z})$ a simple prior distribution, and $\frac{\partial f(\mathbf{x})}{\partial \mathbf{x}}$ the Jacobian with respect to $\mathbf{x}$. The transformation $f(\mathbf{x})$ is commonly composed of $K$ transformations $f(\mathbf{x}) = f_K \circ \dots \circ f_1(\mathbf{x})$ to increase the overall expressivity of $f(\mathbf{x})$.
The choice of $f(\mathbf{x})$ is restricted by the computational complexity of calculating the Jacobian $\frac{\partial f(\mathbf{x})}{\partial \mathbf{x}}$. In recent years, a multitude of transformations with easy to compute Jacobians have been developed, for an overview see \cite{kobyzev_normalizing_2020, papamakarios_normalizing_2019}.

In this paper we will use neural spline flows \citep{durkan_neural_2019, dolatabadi_invertible_2020}. Neural spline flows have two major advantages: 1. A better functional flexibility than affine transformations ($\mathbf{y= sx + t}$), 2. A numerically stable, analytic inverse that has the same computational and space complexities as the forward operation.  While \cite{durkan_neural_2019} use quadratic, cubic, and rational quadratic functions whose inversion is done after solving polynomial equations, \cite{dolatabadi_invertible_2020} show that piecewise linear rational splines can perform competitively with these
methods without requiring a polynomial equation to
be solved in the inversion. Because of its reduced computational cost, we will use linear rational splines throughout this paper. 

Consider a set of monotonically increasing points $\{ (x^{(k)}, y^{(k)})\}^K_{k=0}$ called knots and a set of derivatives at each of the points $\{ d^{(k)}\}^K_{k=0}$. For each bin $[x^{(k)}, x^{(k+1)}]$ we want to find a linear rational function of the form $\frac{ax+b}{cx+d}$ that fit the given points and derivatives. 
\begin{algorithm}
\caption{\cite{fuhr_monotone_1992} Linear Rational Spline Interpolation of Monotonic data in the interval $\left[x^{(k)}, x^{(k+1)}\right]$.}
  \scriptsize
\textbf{Input:} $x^{(k)} < x^{(k+1)}$, $y^{(k)} < y^{(k+1)}$, $d^{(k)} > 0$, $d^{(k+1)} > 0$ 
\begin{algorithmic}[1]
  \STATE set $w^{(k)}$ > 0
  \STATE set $0 < \lambda^{(k)} < 1$
  \STATE $w^{(k)} = \sqrt{\frac{d^{(k)}}{d^{(k+1)}}} w^{(k)}$
  \STATE $y^{m} = \frac{w^{(k)}y^{(k)}\left(1 - \lambda^{(k)}\right) + w^{(k+1)} y^{(k+1)}\lambda^{(k)}}{w^{(k)}\left(1 - \lambda^{(k)}\right) + w^{(k+1)} \lambda^{(k)}}$
  \STATE $w^{(m)} = \left(\lambda^{(k)}w^{(k)}d^{(k)} +  \left(1 - \lambda^{(k)}\right)w^{(k+1)}d^{(k+1)}\right)\frac{x^{(k+1)}- x^{(k)}}{y^{(k+1)} - y^{(k)}}$
\end{algorithmic}
 \textbf{Return:} $\lambda^{(k)}, w^{(k)}, w^{(m)}, w^{(k+1)}, y^{(m)}$
\end{algorithm}

The values returned by Algorithm 1 are subsequentely used to express the following linear rational spline function
\begin{align}
    f(\boldsymbol{\phi}) = \begin{cases} \hspace{0.5cm} \frac{w^{(k)}y^{(k)}(\lambda^{(k)} - \boldsymbol{\phi}) + w^{(m)} y^{(m)} \boldsymbol{\phi}}{w^{(k)}(\lambda^{(k)} - \boldsymbol{\phi}) + w^{(m)} \boldsymbol{\phi}} \hspace{0.75cm} 0 \leq \boldsymbol{\phi} \leq \lambda^{(k)}\\
\frac{w^{(m)}y^{(m)}(1 - \boldsymbol{\phi}) + w^{(k+1)} y^{(k+1)} (\boldsymbol{\phi} - \lambda^{(k)})}{w^{(m)}(1 - \boldsymbol{\phi}) + w^{(k+1)} (\boldsymbol{\phi} - \lambda^{(k)})} \hspace{0.2cm} \lambda^{(k)} \leq \boldsymbol{\phi} \leq 1 \end{cases}
\end{align}
where $\boldsymbol{\phi} = (x - x^{(k)})/(x^{(k+1)} - x^{(k)})$.

 Spline flows have two hyperparameters, the boundary $B$ of the interval $[-B, B]$ and the number of bins $K$. Outside of the interval $[-B, B]$, the identity function is used. Using Equation \ref{eq:flow} we can update the parameters of the neural spline flow using maximum-likelihood estimation in combination with gradient descent. In the case where $\mathbf{x}$ has two or more dimensions, either coupling layers \citep{dinh_density_2017} or autoregressive layers \citep{papamakarios_masked_2018} can be used. 

At multiple points in this paper we are required to estimate conditional distributions, e.g. $p(\mathbf{y|x})$, where we will use conditional normalizing flows to estimate conditional probabilities. We consider the mapping $f: \mathcal{X} \times \mathcal{Y} \rightarrow \mathcal{Z}$, which is bijective in $\mathcal{Y}$ and $\mathcal{Z}$, and a simple prior distribution $p_\mathbf{Z}(\mathbf{z})$. Again, using the change of variable formula we can express the conditional distributions $p(\mathbf{y|x})$ as follows
\begin{align}
    p(\mathbf{y|x}) = p_\mathbf{Z}(f_x(\mathbf{y}))\left\lvert \det \left(\frac{\partial f_x(\mathbf{y})}{\partial \mathbf{y}}\right) \right\lvert.
\end{align}

The conditional version of the linear rational spline transformation uses a neural network to predict the derivatives $\mathbf{d}$, width $\mathbf{w}$, height $\mathbf{h}$, and $\boldsymbol{\lambda}$ from $\mathbf{x}$: $\mathbf{w, h, d}, \boldsymbol{\lambda} =  NN_{\boldsymbol{\theta}}(\mathbf{x})$.

\subsection{Parameterization using normalizing flows}

While parameterizing the functions $F$ and $G$ can be done in many different ways, we here use diffeomorphisms, i.e., differentiable mappings with a differentiable inverse. Using the change-of-variables formula, we can derive a maximum-likelihood estimator for the mappings' parameters that can be efficiently optimized through backpropagation.
In the deep learning community, those invertible and differentiable mappings are called \emph{normalizing flows}, and much recent research went into finding flexible and easily invertible mappings. See for example \cite{pawlowski_deep_2020} and \cite{khemakhem_causal_2020} for other current applications of normalizing flows to approximate nonlinear causal mechanisms. 

Our flow model consists of two flows, where the first flow corresponds to $F$ and is trained using observational data, while the second flow corresponds to $G$ and is trained using observational and interventional data. In the following, we derive the loss function for observational and interventional data separately. For the remaining part of this sectoin we focus on one-dimensional treatment outcome pairs, i.e.\ $X \in \mathcal{X}= \mathbb{R}$ and $Y \in \mathcal{Y}= \mathbb{R}$, and (optionally) an $L$-dimensional observed confounder $\mathbf{C} \in \mathcal{C} = \mathbb{R}^L$.

\subsubsection{Observational data}
\label{sec:implementations_obser}

To keep the notation simple, we will henceforth suppress the dependence on $\mathbf{c}$.
According to the SCM in Equations \ref{eq:scm_X} and \ref{eq:scm_Y}, the joint likelihood $p(x,y)$ can be factorized as
$    \log p(x,y) = \log p(y|x) + \log p(x).$
We now use the following bijective transformations between observed variables $x, y$ and latent variables $u, v$
\begin{align}
u &= f_{\boldsymbol{\phi}}(x),\\
v &= g_{x, u; \boldsymbol{\theta}}(y),
\end{align}
where the functions $f_{\boldsymbol{\phi}}$ and $g_{x,u;\boldsymbol{\theta}}$ are invertible 
for all $\boldsymbol{\phi}, x, u, \boldsymbol{\theta}$. Here, $f_{\boldsymbol{\phi}} = F^{-1}$ from Equation~\ref{eq:reduced_se_X}
and $g_{x, u; \boldsymbol{\theta}}$ is the inverse of $v \mapsto G(u,v,x)$ from Equation~\ref{eq:shared_function} (for fixed $u,x,\boldsymbol{\theta}$).\footnote{In the presence of observed confounding, we simply have to replace the functions $f_{\boldsymbol{\phi}}$ and $g$ by functions $f_{\mathbf{c};\boldsymbol{\phi}}$ and $g_{x, u, \mathbf{c}; \boldsymbol{\theta}}$.}

The SCM also specifies that $u \sim \mathcal{N}(0,1)$, $v \sim \mathcal{N}(0,1)$ and $u \Indep v$.
The transformations defined above allow us to rewrite the joint likelihood using the change of variable formula
\begin{align}
    \log p(x,y) &= \log p_V(g_{x,u; \boldsymbol{\theta}}(y)) + \log \left\lvert \frac{\partial g_{x, u; \boldsymbol{\theta}}(y)}{\partial y} \right\lvert + \log p_U(f_{\boldsymbol{\phi}}(x)) + \log \left\lvert \frac{\partial f_{\boldsymbol{\phi}}(x)}{\partial x} \right\lvert. \nonumber\\
    &= \log p_V(g_{x,f_{\boldsymbol{\phi}}(x); \boldsymbol{\theta}}(y)) + \log \left\lvert \frac{\partial g_{x, f_{\boldsymbol{\phi}}(x); \boldsymbol{\theta}}(y)}{\partial y} \right\lvert
    + \log p_U(f_{\boldsymbol{\phi}}(x)) + \log \left\lvert \frac{\partial f_{\boldsymbol{\phi}}(x)}{\partial x} \right\lvert.
\end{align}
where in the last step, we substituted $u = f_{\boldsymbol{\phi}}(x)$ into $g_{x, u; \boldsymbol{\theta}}(y)$. 
The parameters $\boldsymbol{\phi}$ and $\boldsymbol{\theta}$ are jointly updated by minimizing $\sum_{o=1}^{N_O}- \log p(x^O_o,y^O_o)$ given $N_O$ observational training samples $(x^O_1,y^O_1),\dots,(x^O_{N_O},y^O_{N_O})$.

\subsubsection{Interventional data}
\label{sec:implementations_inter}
In contrast to the observational setting, we only have to consider the conditional likelihood $p(y|\doit(x))$ in the interventional case. Since we cannot use $f_{\boldsymbol{\phi}}(x)$ to impute $u$, we instead marginalize over $u$
\begin{align}
 \log p(y\mid\doit(x)) =  \log \int p(y|\doit(x), u) p(u) du.
 \label{eq:log_p_inter}
\end{align}
Substituting the bijective mapping $v = g_{x, u; \boldsymbol{\theta}}(y)$ into Equation \ref{eq:log_p_inter}, we obtain
\begin{align}
     \log p(y\mid\doit(x)) = \log \int p_V(g_{x,u; \boldsymbol{\theta}}(y)) \left\lvert \frac{\partial g_{x, u; \boldsymbol{\theta}}(y)}{\partial y} \right\lvert p(u) du.
\end{align}
Since this is a one-dimensional integral, we can approximate it accurately numerically by means of the 
trapezoidal rule. The parameter $\boldsymbol{\theta}$ can be updated by minimizing $\sum_{i=1}^{N_I}- \log p(y^I_i|\doit(x^I_i))$ given $N_I$ interventional training samples $(x^I_1,y^I_1),\dots,(x^I_{N_I},y^I_{N_I})$.

\subsubsection{Sampling from the model}
\label{sec:how_to_sample}
\label{sec:implementation_observed_confounder}
After training we can easily generate observational and interventional samples from the trained model. 
The sampling procedure from the observational conditional distribution $p(y \mid x)$ consists of the following steps: 
\begin{align*}
  v &\sim \mathcal{N}(0,1), \\
  u &\gets f_{\boldsymbol{\phi}}(x^O),  \\
  y^O &\gets g_{x^O, u; \boldsymbol{\theta}}^{-1}(v),
\end{align*}
where we assume $x^O \in \mathbb{R}$ to be given. 
If we instead want to generate an interventional sample from $p(y \mid \doit(x))$, the sampling procedure is as follows: 
\begin{align*}
  v &\sim \mathcal{N}(0,1), \\
  u &\sim \mathcal{N}(0,1), \\
  y^I &\gets g_{x^I, u; \boldsymbol{\theta}}^{-1}(v), 
\end{align*}
where we assume $x^I \in \mathbb{R}$ to be observed. 

\section{Experiment details}
\subsection{Simulation details}
\label{sec:data_generation_details}
The generation of observational and interventional samples follows \cite{mooij_distinguishing_2016}. Instead of using Gaussian processes to model the causal mechanisms, we use two randomly initialized neural networks, $NN_1$ and $NN_2$.

\subsection{Nonlinear experiments without observed confounders}
\subsubsection{Sampling from a random distribution}
We use the following steps to generate samples from a random distribution
\begin{enumerate}
    \item $X \sim \mathcal{N}(0, 1)$
    \item sort $X$ in ascending order $= \overrightarrow{X}$
    \item sample from Gaussian Process: $F \sim \mathcal{N}(0, K_{\boldsymbol{\theta}}(\overrightarrow{X}) + \sigma^2 I)$, where for the kernel $K_{\boldsymbol{\theta}}$ we use the squared exponential covariance function with automatic relevance determination kernel
    \item use the trapezoidal rule to calculate the cumulative integral of $\exp(F)$, we obtain a vector $G$ where each element $G_i$ corresponds to $G_i = \int_{\overrightarrow{X_1}}^{\overrightarrow{X_i}} \exp(F(x)) dx$
\end{enumerate}

We will denote this whole sampling procedure by $G \sim \mathcal{RD}(\boldsymbol{\theta}, \sigma)$, where we sample $\boldsymbol{\theta}$ from a Gamma distribution $\Gamma(a,b)$ and set $\sigma = 0.0001$.

\subsubsection{Generate observational and interventional data}
1. Sample from latent variables 
\begin{align}
    \boldsymbol{\theta}_{E_X} &\sim \Gamma(a_{E_X}, b_{E_X}),\\
    \boldsymbol{\theta}_{E_Y} &\sim \Gamma(a_{E_Y}, b_{E_Y}),\\
    \mathbf{\boldsymbol{\theta}}_\mathbf{Z} &\sim \Gamma(a_\mathbf{Z}, b_\mathbf{Z}),\\
    E_X &\sim \mathcal{RD}(\boldsymbol{\theta}_{E_X}, \sigma),\\
    E_Y &\sim \mathcal{RD}(\boldsymbol{\theta}_{E_Y}, \sigma),\\
    \mathbf{Z} &\sim \mathcal{RD}(\mathbf{\boldsymbol{\theta}}_\mathbf{Z}, \sigma).
\end{align}
2. Generate $X_\text{observational}$
\begin{align}
     X_\text{observational} = NN_1(E_X, \mathbf{Z}).
\end{align}
3. Normalize $X_\text{observational}$
\begin{align}
    X_\text{observational} = \frac{X_\text{observational} - \mathbb{E}\left[X_\text{observational}\right]}{\sqrt{\mathbb{V}[X_\text{observational}]}}.
\end{align}

4. Generate $Y_\text{observational}$
\begin{align}
    Y_\text{observational} &= NN_2(X_\text{observational}, E_Y, \mathbf{Z}).
\end{align}

5. Sample from latent variables
\begin{align}
    E_Y \sim \mathcal{RD}(\boldsymbol{\theta}_{E_Y}, \sigma)\\
    \mathbf{Z} \sim \mathcal{RD}(\mathbf{\boldsymbol{\theta}}_\mathbf{Z}, \sigma)
\end{align}

6. Generate $X_\text{interventional}$
\begin{align}
    \boldsymbol{\theta}_{X} \sim \Gamma(a_{X}, b_{X}),\\
     X_\text{interventional} \sim \mathcal{RD}(\boldsymbol{\theta}_X, \sigma).
\end{align}

7. Normalize $X_\text{interventional}$
\begin{align}
    X_\text{interventional} = \frac{X_\text{interventional} - \mathbb{E}\left[X_\text{interventional}\right]}{\sqrt{\mathbb{V}[X_\text{interventional}]}}.
\end{align}

8. Generate $Y_\text{interventional}$
\begin{align}
    Y_\text{inter} &= NN_2(X_\text{inter}, E_Y, \mathbf{Z}).
\end{align}

9. Generate noise
\begin{align}
    \epsilon_{x, \text{observational}} &\sim \mathcal{N}(0,1),\\
    \epsilon_{x, \text{interventional}} &\sim \mathcal{N}(0,1),\\
    \boldsymbol{\theta}_{\epsilon_x} &\sim \Gamma(a_{\epsilon_x}, b_{\epsilon_x}),\\
    \epsilon_{y, \text{observational}} &\sim \mathcal{N}(0,1),\\
    \epsilon_{y, \text{interventional}} &\sim \mathcal{N}(0,1),\\
    \boldsymbol{\theta}_{\epsilon_y} &\sim \Gamma(a_{\epsilon_y}, b_{\epsilon_y}).
\end{align}

10. Add noise
\begin{align}
    X'_\text{observational} &= X_\text{observational} + \boldsymbol{\theta}_{\epsilon_x} \epsilon_{x, \text{observational}},\\
    X'_\text{interventional} &= X_\text{interventional} + \boldsymbol{\theta}_{\epsilon_x} \epsilon_{x, \text{interventional}},\\
    Y'_\text{observational} &= Y_\text{observational} + \boldsymbol{\theta}_{\epsilon_y} \epsilon_{y, \text{observational}},\\
    Y'_\text{interventional} &= Y_\text{interventional} + \boldsymbol{\theta}_{\epsilon_y} \epsilon_{y, \text{interventional}}.
\end{align}

11. Normalize $Y$ jointly
\begin{align}
    Y' &= [Y'_\text{observational}, Y'_\text{interventional}],\\
    Y'_\text{observational} &= \frac{Y'_\text{observational} - \mathbb{E}\left[Y'\right]}{\sqrt{\mathbb{V}[Y']}},\\
    Y'_\text{interventional} &= \frac{Y'_\text{interventional} - \mathbb{E}\left[Y'\right]}{\sqrt{\mathbb{V}[Y']}}.
\end{align}

The two neural networks $NN_1$ and $NN_2$ are Multi-layer perceptrons with a single hidden layer. The hidden layer contains 1024 units. The input layer and the hidden layer use a ReLU activation function. The weights and biases for both neural networks are uniformly sampled from the interval $[-1, 1]$. We choose the other simulation parameters as follows: $a_{E_X} =  a_{E_Y} =  a_{Z} = a_{X} = 5$, $a_{\epsilon_x} = a_{\epsilon_y} = 2$, $b_{E_X} = b_{E_Y} = b_{Z} = b_{X} = b_{\epsilon_x} = b_{\epsilon_y} = 0.1$, $\sigma=0.0001$

\subsection{Simulation details: Nonlinear experiments with observed confounders}

In order to simulate data with additional observed confounders, we first generate $\mathbf{C}$
\begin{align}
    \boldsymbol{\theta}_C = \Gamma(a_C, b_C),\\
    \mathbf{C} \sim \mathcal{RD}(\boldsymbol{\theta}_C, \sigma),
\end{align}
where $a_C=10$ and $b_C=1$. In addition, we modify steps 2,4 and 8 as follows
\begin{align}
    X_\text{observational} &= NN_1(E_X, \mathbf{Z}, \mathbf{C}),\\
    Y_\text{observational} &= NN_2(X_\text{observational}, E_Y, \mathbf{Z}, \mathbf{C}),\\
    Y_\text{inter} &= NN_2(X_\text{inter}, E_Y, \mathbf{Z}, \mathbf{C}).
\end{align}

\subsection{Data selection, hyperparameters and training}
Following the process described above, we simulate 100 datasets while varying the number of dimensions $K$ of the unobserved confounder $\mathbf{Z}$ and the random seed that is, among others, controlling the initialization of the neural networks used to model the causal mechanisms. We choose $K$ between 1 and 10 since for $K>10$, the joint distribution $p(x,y)$ becomes increasingly Gaussian due to the central limit theorem. Next, we manually select ten datasets with the smallest overlap of observational and interventional samples to select cases with ``strong'' confounding. Note that we choose these ten datasets before training a single flow model. A scatter plot of 1000 observational and 1000 interventional samples for each of the ten datasets can be found in Appendix \ref{sec:all_results}.

Motivated by the work of \cite{oliver_realistic_2019} on the realistic evaluation of semi-supervised learning algorithms, we use the same number of samples for training and validation. In every case, we use 1000 interventional samples for testing. To compare the performance of the three flow models, we calculate the negative log-likelihood averaged over the test set,
$-\frac{1}{N_I}\sum_{i=1}^{N_T} \log p(y^T_i\mid\doit(x^T_i))$, where the test set 
consists of $N_T$ samples $(x^T_1,y^T_1),\dots,(x^T_{N_T},y^T_{N_T})$ from $p(y \mid \doit(x))$.
To have a fair comparison, the same training procedure, architecture, optimizer, and hyperparameters are used for all flow models in all experiments. We use Adam \citep{kingma_adam_2017} with a learning rate of $0.001$ and the default values for $\beta_1, \beta_2$. We train for 10,000 epochs. The training is terminated early when the validation loss did not improve for 1,000 epochs. We perform full batch gradient descent, where we alternate between batches of observational and interventional samples for the third flow model. For the linear rational spline flows, we use 32 bins and set the bound $B=6$. We use a fully connected neural network with two hidden layers and ReLU activations for the conditional version of the linear rational spline flows. 

\subsection{Nonlinear experiment results without observed confounders}

In the following, for each of the 15 datasets we present the original training data (top), as well as interventional samples from a flow model trained with 50 interventional samples (center), and samples from a flow model trained with 50 interventional samples and 1000 observational samples (bottom). The samples are generated as described in Section \ref{sec:how_to_sample}.

In addition we show performances measured in terms of negative log-likelihood on the observational and the interventional test sets, respectively. We compare a flow model trained with 1000 observational samples, a flow model trained with {50, 100, 250, 500, 750, 1000} interventional samples, and a flow model trained with both 1000 observational samples and {50, 100, 250, 500, 750, 1000} interventional samples. All flow models are evaluated on 1000 interventional samples from the test set. Last, we compare a flow model trained with 1000 observational samples, a flow model trained with {50, 100, 250, 500, 750, 1000} interventional samples, and a flow model trained with both 1000 observational samples and {50, 100, 250, 500, 750, 1000} interventional samples. All flow models are evaluated on 1000 observational samples from the test set. We report the mean and standard error for ten runs of each experiment.

\label{sec:all_results}
\twocolumn
\subsubsection{Dataset 1: $\#$ of confounders = 1, random seed = 6}
\begin{figure}[H]
\centering
\includegraphics[scale=0.5]{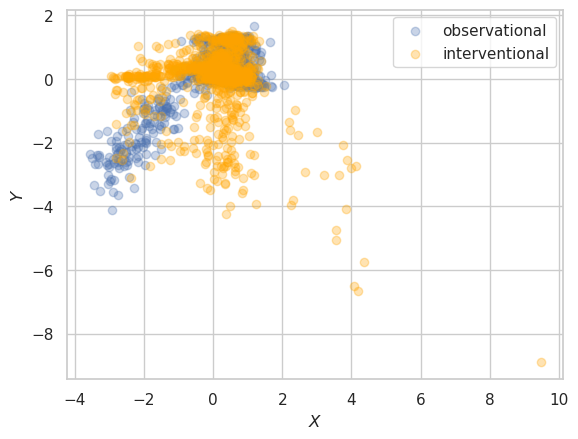}
\includegraphics[scale=0.5]{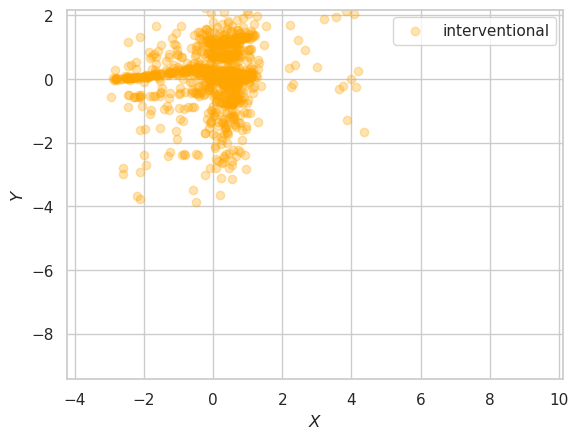}
\includegraphics[scale=0.5]{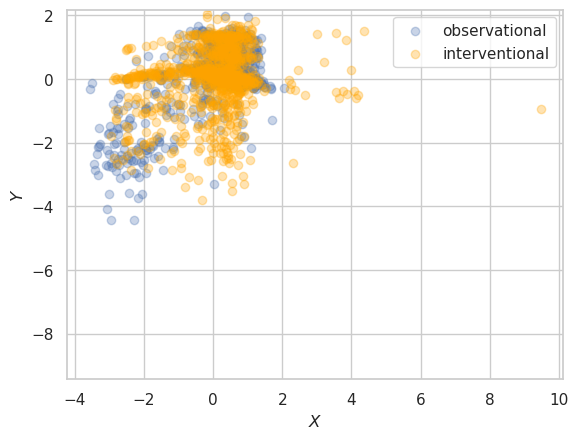}
\caption{Dataset 1: Interventional and observational samples. (Top) training data. (Center) samples from flow model trained with only interventional data. (Bottom) samples from flow model trained with observational and interventional data.}
\end{figure}
\begin{figure}[H]
\centering
\includegraphics[scale=0.5]{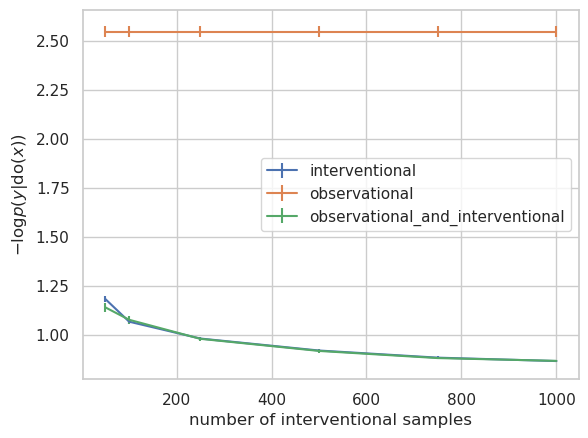}
\includegraphics[scale=0.5]{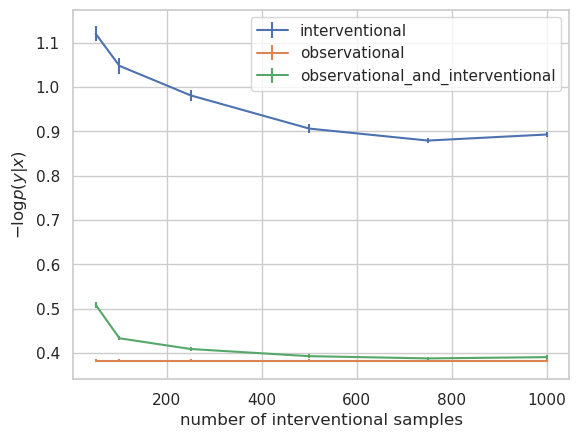}
\caption{Dataset 1: Performances measured in terms of negative log-likelihood on the interventional (top) and the observational (bottom) test sets.}
\end{figure}
\subsubsection{Dataset 2: $\#$ of confounders = 1, random seed $=$ 8}
\begin{figure}[H]
\centering
\includegraphics[scale=0.5]{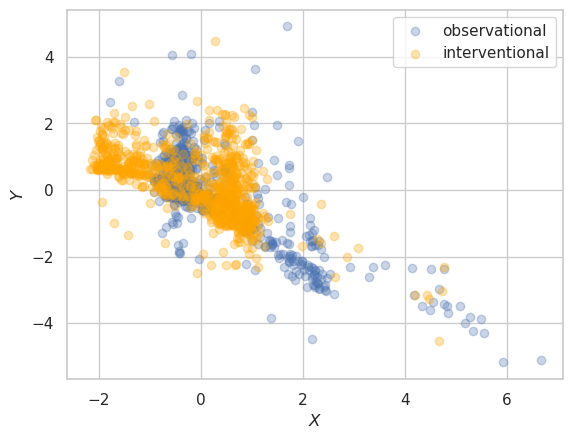}
\includegraphics[scale=0.5]{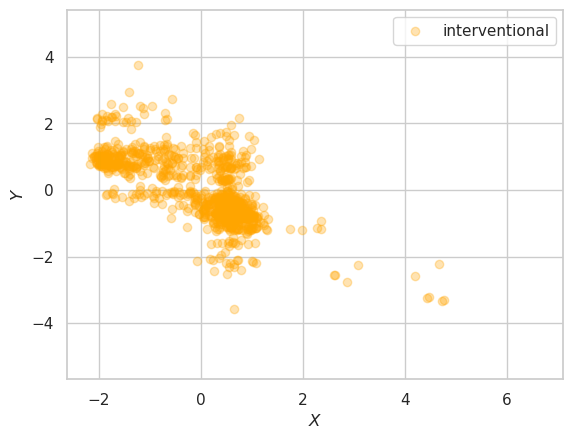}
\includegraphics[scale=0.5]{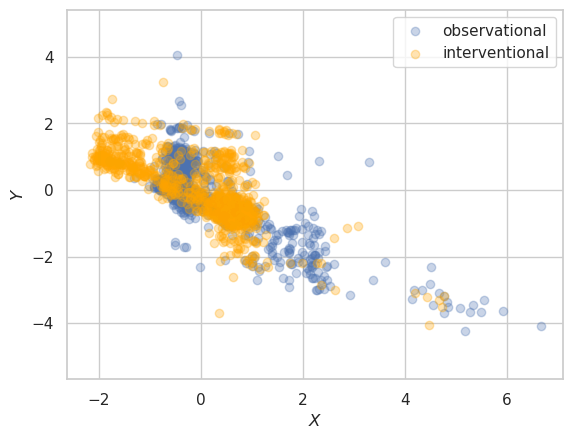}
\caption{Dataset 2: Interventional and observational samples. (Top) training data. (Center) samples from flow model trained with only interventional data. (Bottom) samples from flow model trained with observational and interventional data.}
\end{figure}
\begin{figure}[H]
\centering
\includegraphics[scale=0.5]{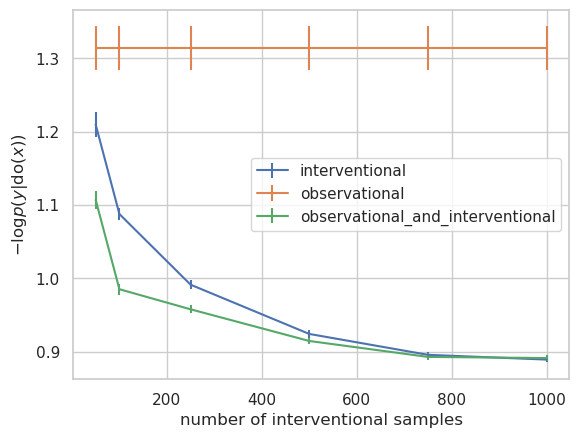}
\includegraphics[scale=0.5]{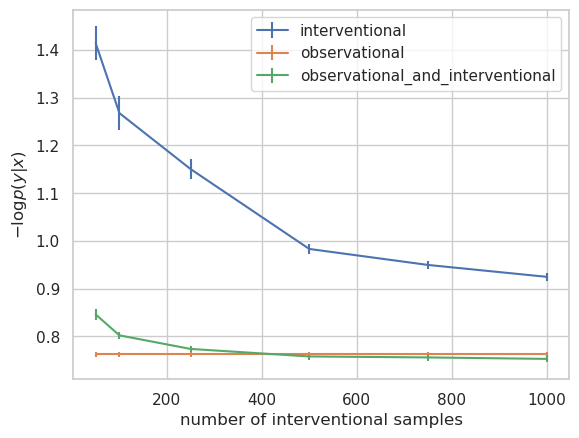}
\caption{Dataset 2: Performances measured in terms of negative log-likelihood on the interventional (top) and the observational (bottom) test sets.}
\end{figure}
\newpage
\subsubsection{Dataset 3: $\#$ of confounders = 2, random seed $=$ 7}
\begin{figure}[H]
\centering
\includegraphics[scale=0.5]{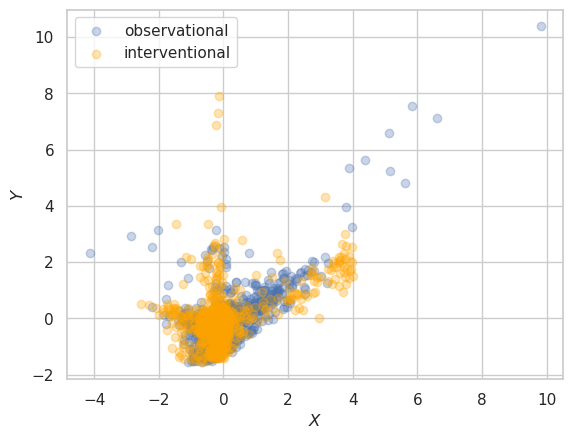}
\includegraphics[scale=0.5]{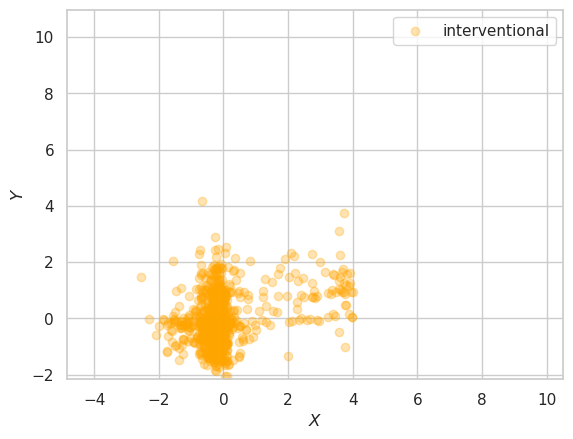}
\includegraphics[scale=0.5]{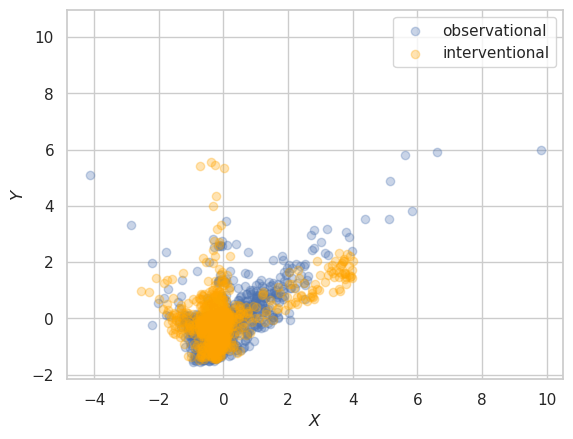}
\caption{Dataset 3: Interventional and observational samples. (Top) training data. (Center) samples from flow model trained with only interventional data. (Bottom) samples from flow model trained with observational and interventional data.}
\end{figure}
\begin{figure}[H]
\centering
\includegraphics[scale=0.5]{nonlinear_data/comparison_gp_seed_7_numZ_2.png}
\includegraphics[scale=0.5]{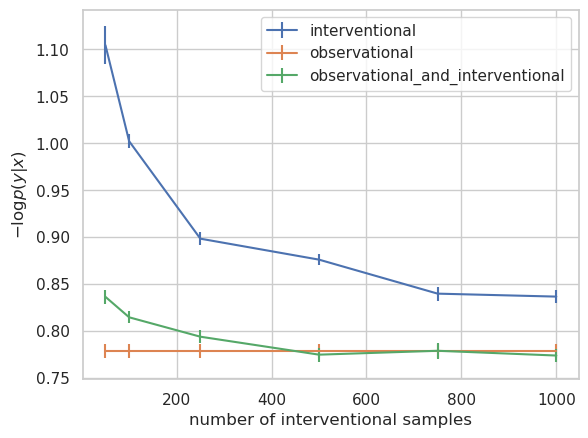}
\caption{Dataset 3: Performances measured in terms of negative log-likelihood on the interventional (top) and the observational (bottom) test sets.}
\end{figure}
\newpage
\subsubsection{Dataset 4: 3 confounders, random seed = 1}
\begin{figure}[H]
\centering
\includegraphics[scale=0.5]{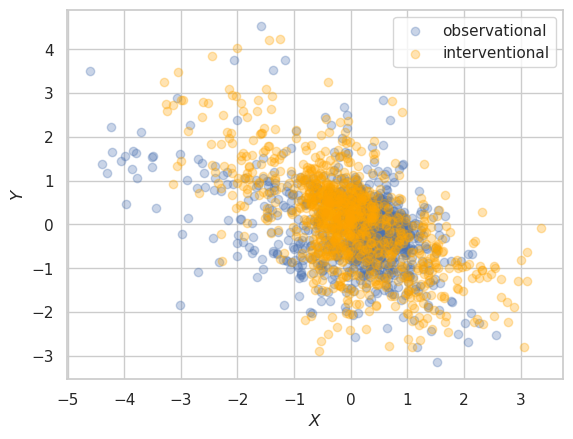}
\includegraphics[scale=0.5]{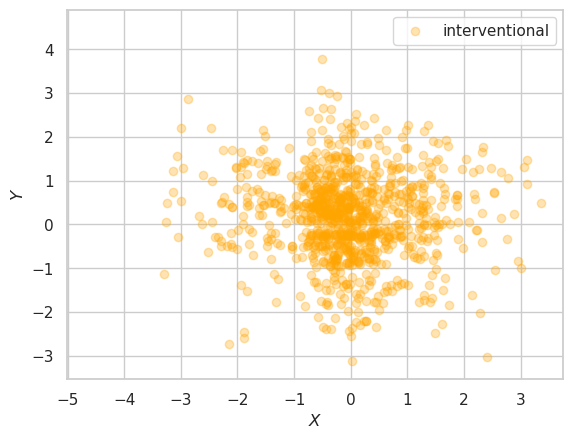}
\includegraphics[scale=0.5]{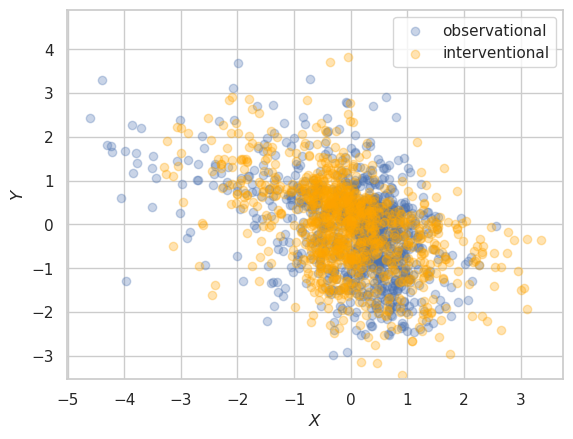}
\caption{Dataset 4: Interventional and observational samples. (Top) training data. (Center) samples from flow model trained with only interventional data. (Bottom) samples from flow model trained with observational and interventional data.}
\end{figure}
\begin{figure}[H]
\centering
\includegraphics[scale=0.5]{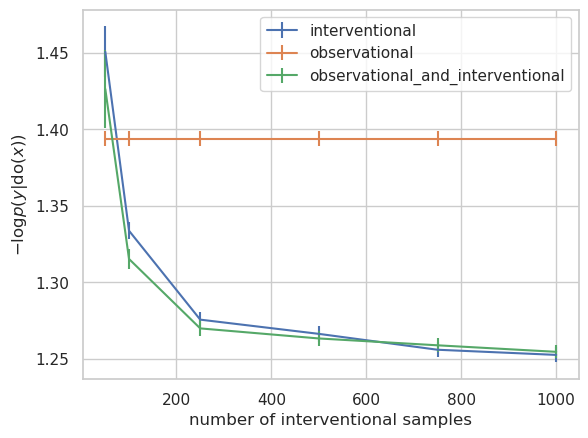}
\includegraphics[scale=0.5]{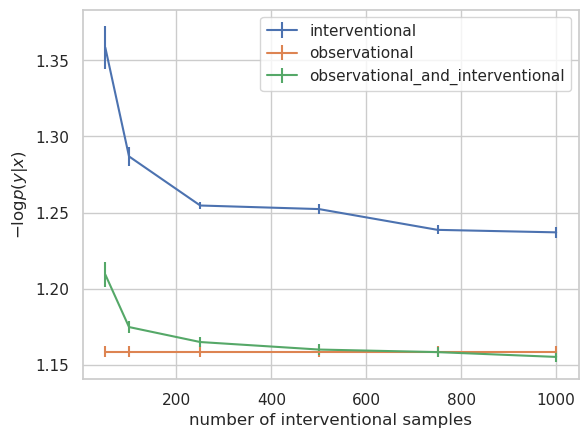}
\caption{Dataset 4: Performances measured in terms of negative log-likelihood on the interventional (top) and the observational (bottom) test sets.}
\end{figure}
\newpage
\subsubsection{Dataset 5: $\#$ of confounders = 4, random seed $=$ 0}
\begin{figure}[H]
\centering
\includegraphics[scale=0.5]{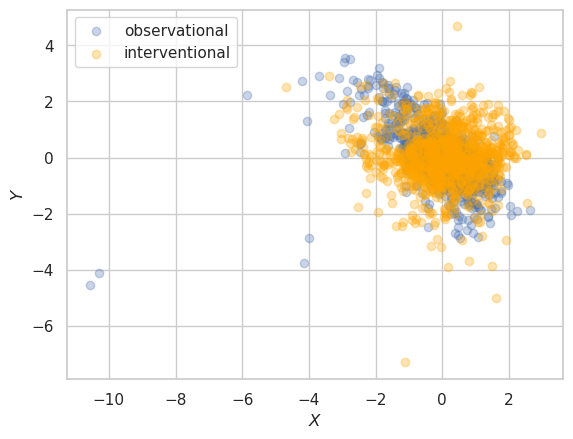}
\includegraphics[scale=0.5]{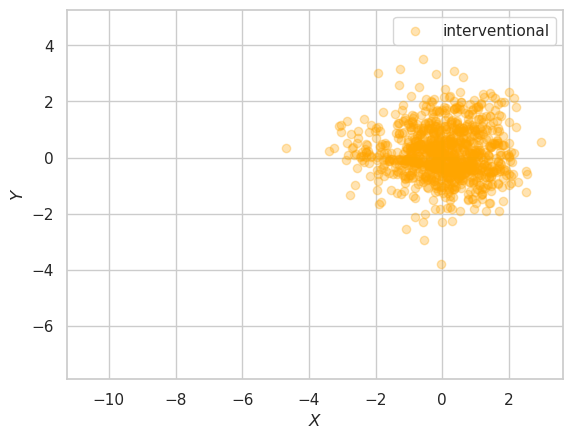}
\includegraphics[scale=0.5]{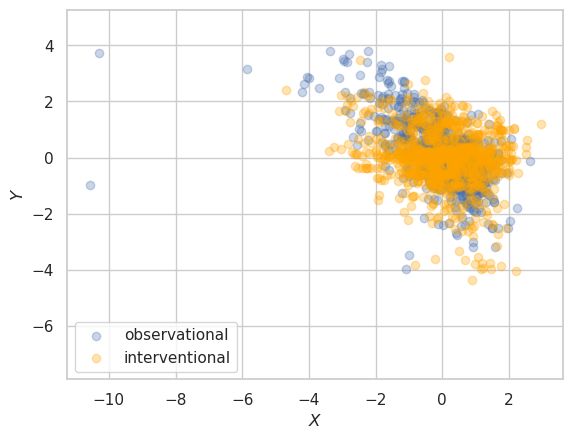}
\caption{Dataset 5: Interventional and observational samples. (Top) training data. (Center) samples from flow model trained with only interventional data. (Bottom) samples from flow model trained with observational and interventional data.}
\end{figure}
\begin{figure}[H]
\centering
\includegraphics[scale=0.5]{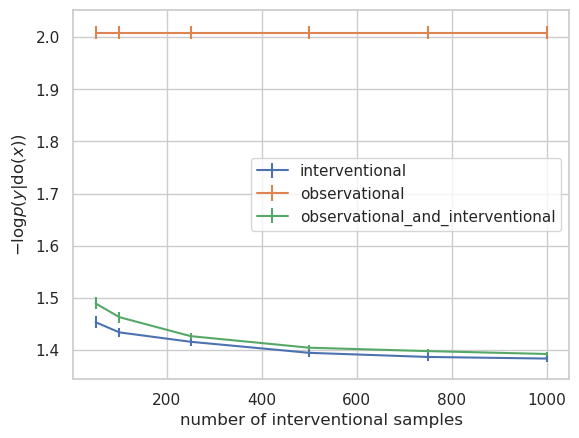}
\includegraphics[scale=0.5]{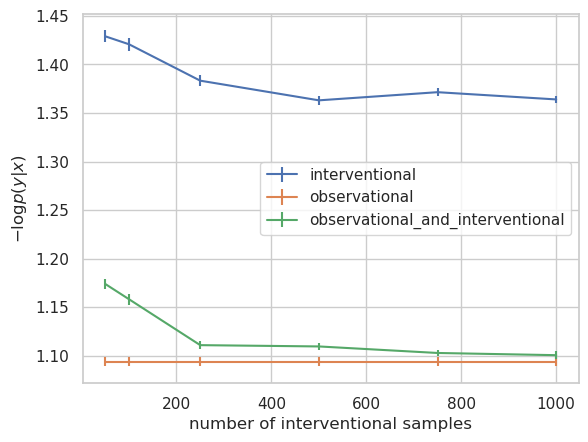}
\caption{Dataset 5: Performances measured in terms of negative log-likelihood on the interventional (top) and the observational (bottom) test sets.}
\end{figure}
\newpage
\subsubsection{Dataset 6: $\#$ of confounders = 4, random seed $=$ 7}
\begin{figure}[H]
\centering
\includegraphics[scale=0.5]{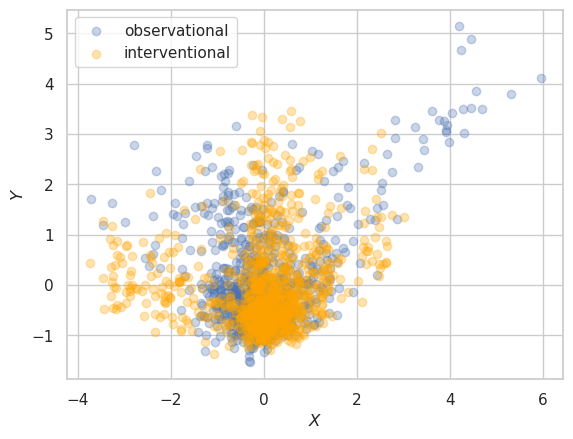}
\includegraphics[scale=0.5]{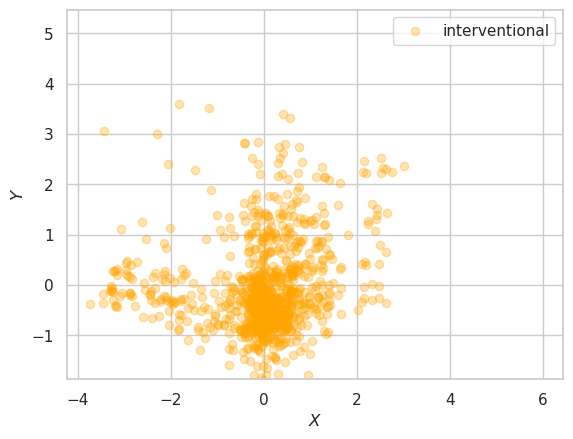}
\includegraphics[scale=0.5]{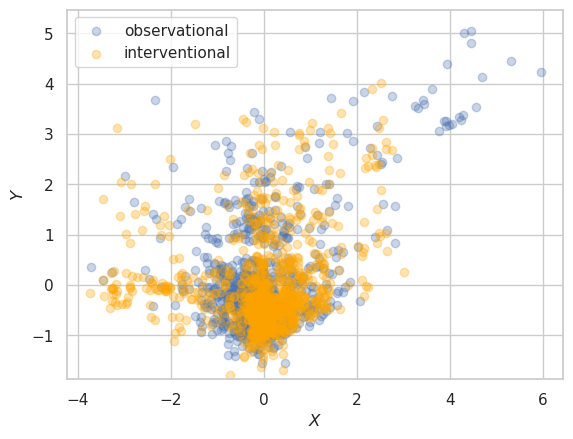}
\caption{Dataset 6: Interventional and observational samples. (Top) training data. (Center) samples from flow model trained with only interventional data. (Bottom) samples from flow model trained with observational and interventional data.}
\end{figure}
\begin{figure}[H]
\centering
\includegraphics[scale=0.5]{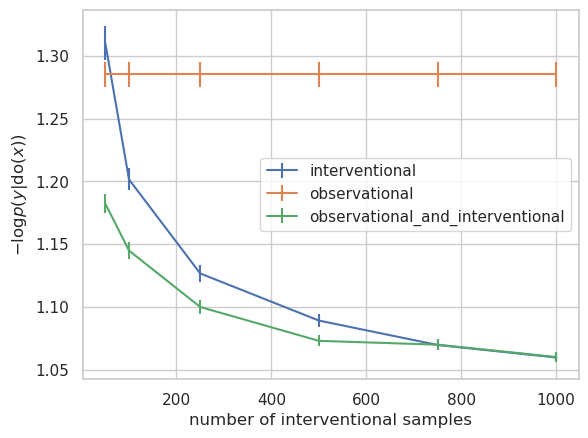}
\includegraphics[scale=0.5]{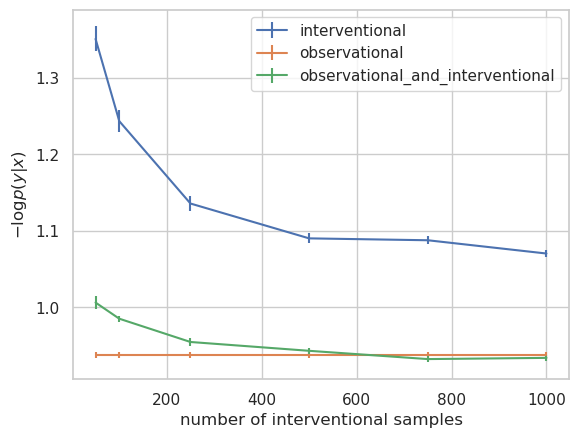}
\caption{Dataset 6: Performances measured in terms of negative log-likelihood on the interventional (top) and the observational (bottom) test sets.}
\end{figure}
\newpage
\subsubsection{Dataset 7: $\#$ of confounders = 5, random seed $=$ 5}
\begin{figure}[H]
\centering
\includegraphics[scale=0.49]{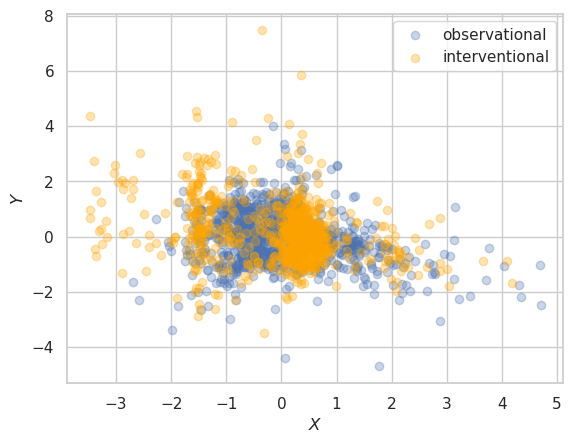}
\includegraphics[scale=0.5]{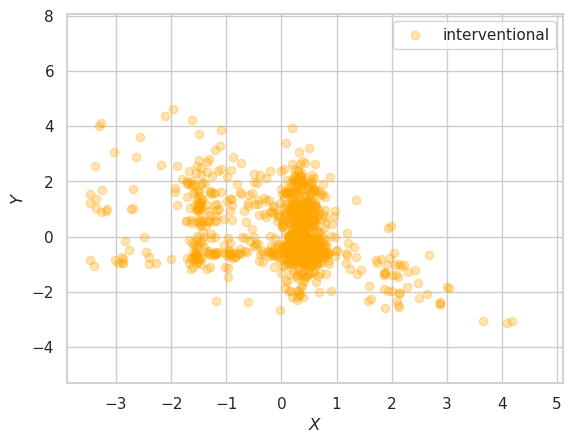}
\includegraphics[scale=0.5]{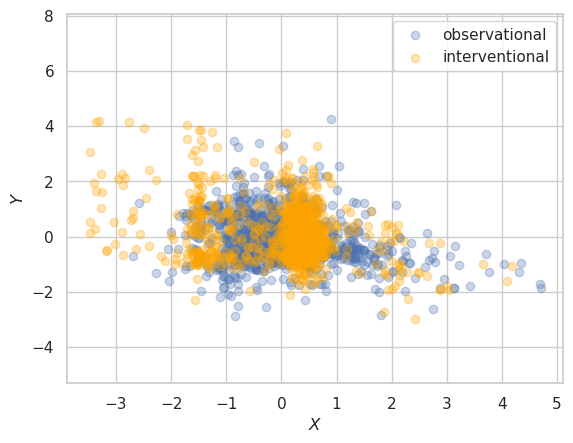}
\caption[Dataset 7: Interventional and observational samples.]{Dataset 7: Interventional and observational samples. (Top) training data. (Center) samples from flow model trained with only interventional data. (Bottom) samples from flow model trained with observational and interventional data.}

\end{figure}

\begin{figure}[H]
\centering
\includegraphics[scale=0.5]{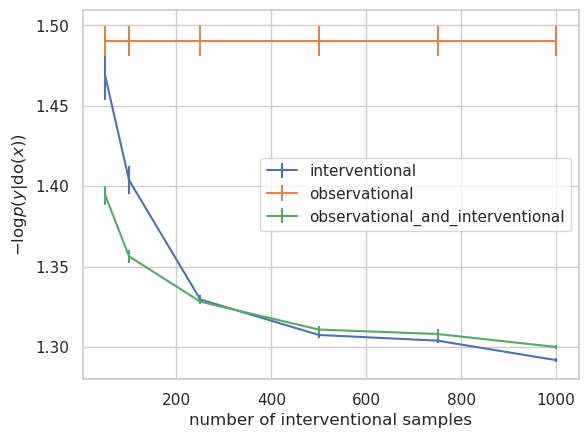}
\includegraphics[scale=0.5]{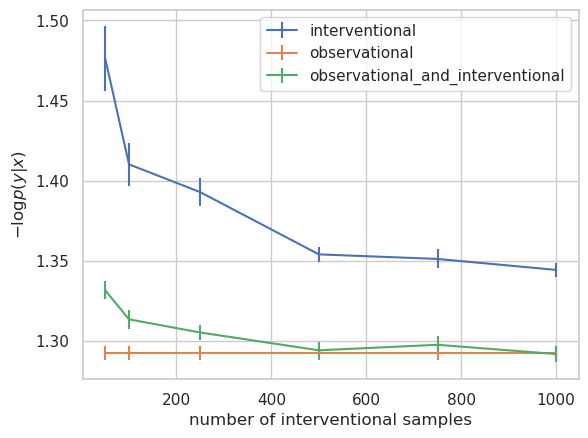}
\caption{Dataset 7: Performances measured in terms of negative log-likelihood on the interventional (top) and the observational (bottom) test sets.}

\end{figure}
\newpage

\subsubsection{Dataset 8: $\#$ of confounders = 5, random seed $=$ 9}
\begin{figure}[H]
\centering
\includegraphics[scale=0.5]{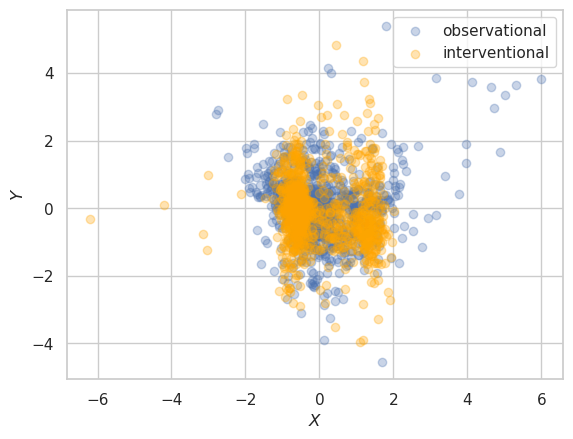}
\includegraphics[scale=0.5]{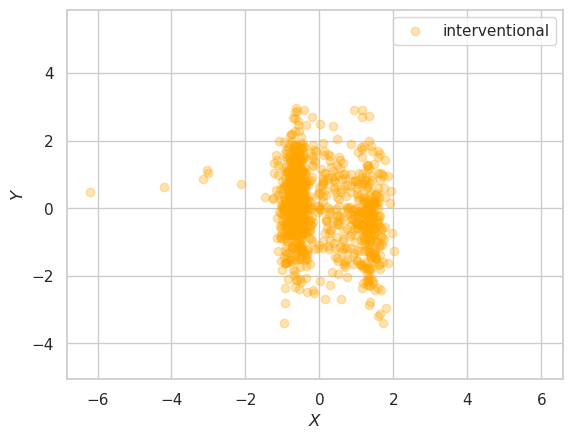}
\includegraphics[scale=0.5]{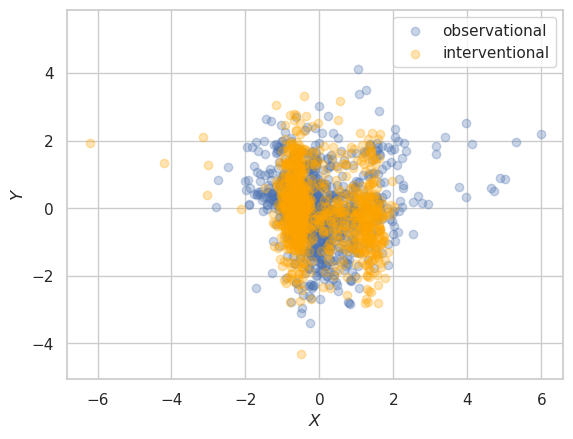}
\caption[Dataset 8: Interventional and observational samples.]{Dataset 8: Interventional and observational samples. (Top) training data. (Center) samples from flow model trained with only interventional data. (Bottom) samples from flow model trained with observational and interventional data.}

\end{figure}

\begin{figure}[H]
\centering
\includegraphics[scale=0.5]{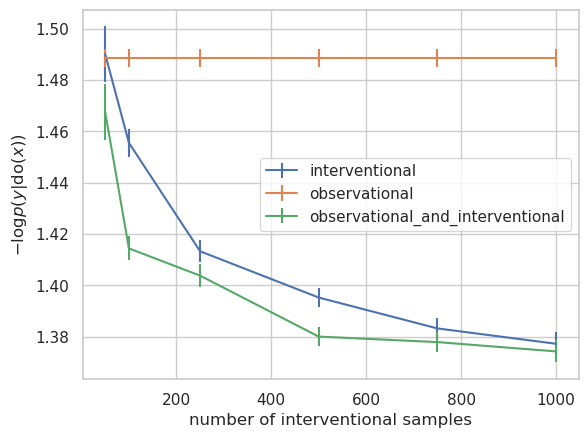}
\includegraphics[scale=0.5]{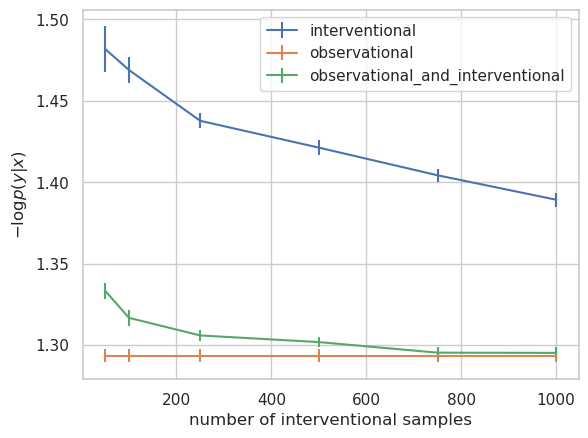}
\caption{Dataset 8: Performances measured in terms of negative log-likelihood on the interventional (top) and the observational (bottom) test sets.}

\end{figure}
\newpage

\subsubsection{Dataset 9: $\#$ of confounders = 7, random seed $=$ 0}
\begin{figure}[H]
\centering
\includegraphics[scale=0.5]{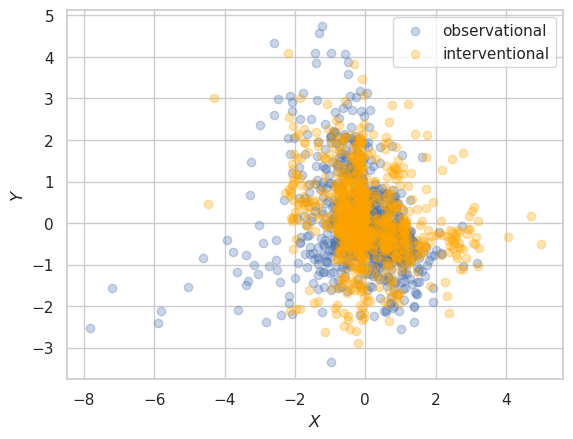}
\includegraphics[scale=0.5]{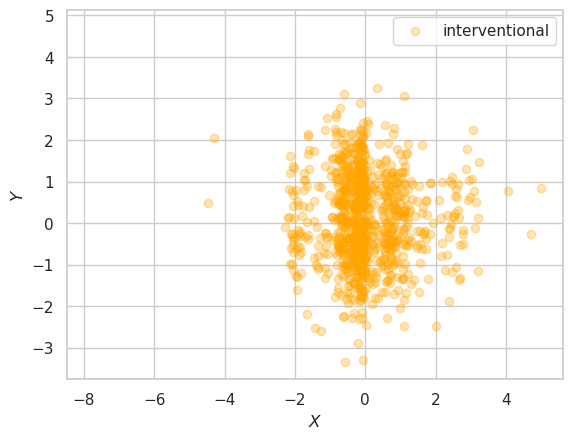}
\includegraphics[scale=0.5]{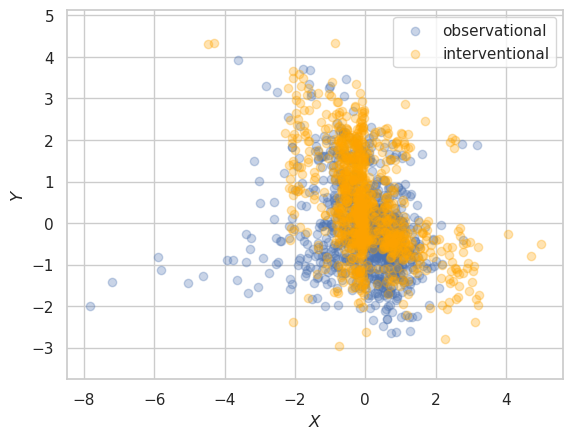}
\caption[Dataset 9: Interventional and observational samples.]{Dataset 9: Interventional and observational samples. (Top) training data. (Center) samples from flow model trained with only interventional data. (Bottom) samples from flow model trained with observational and interventional data.}

\end{figure}

\begin{figure}[H]
\centering
\includegraphics[scale=0.5]{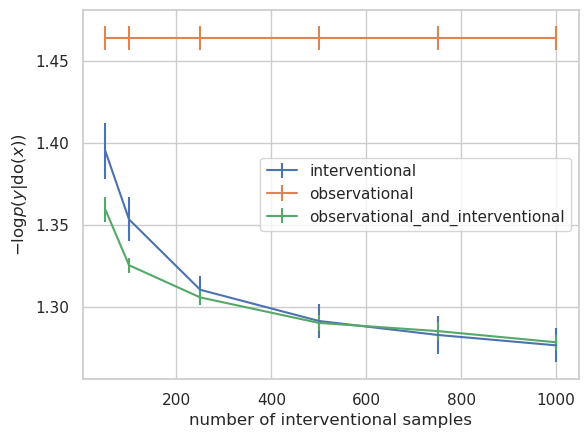}
\includegraphics[scale=0.5]{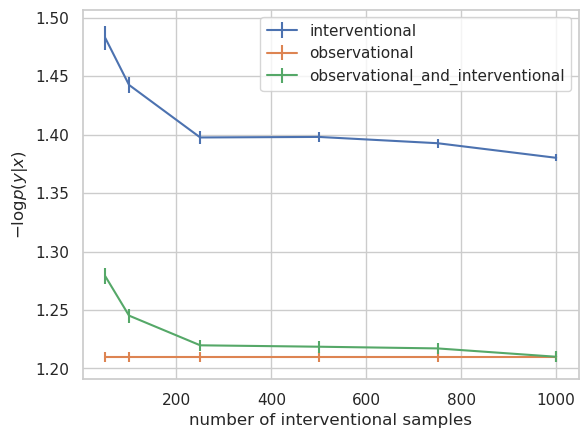}
\caption{Dataset 9: Performances measured in terms of negative log-likelihood on the interventional (top) and the observational (bottom) test sets.}

\end{figure}
\newpage
\subsubsection{Dataset 10: $\#$ of confounders = 7, random seed $=$ 5}
\begin{figure}[H]
\includegraphics[scale=0.5]{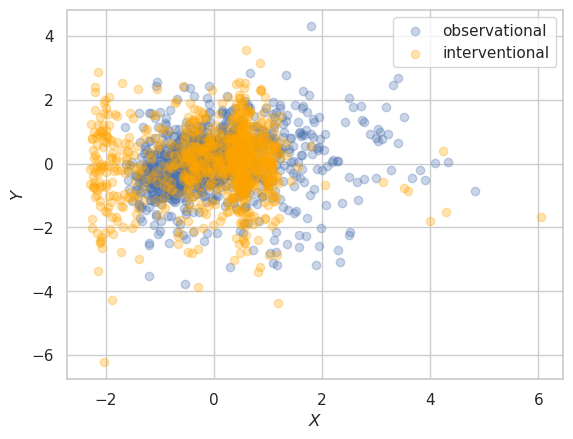}
\includegraphics[scale=0.5]{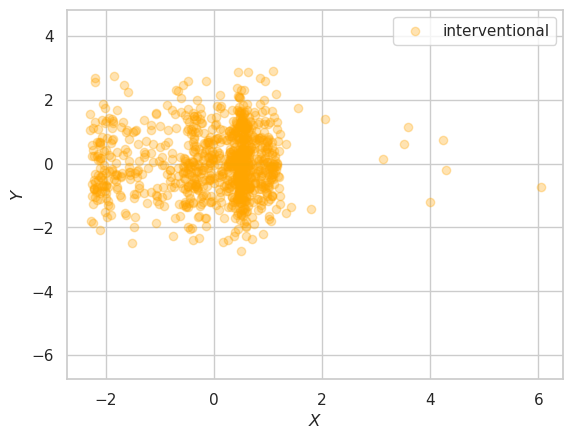}
\includegraphics[scale=0.5]{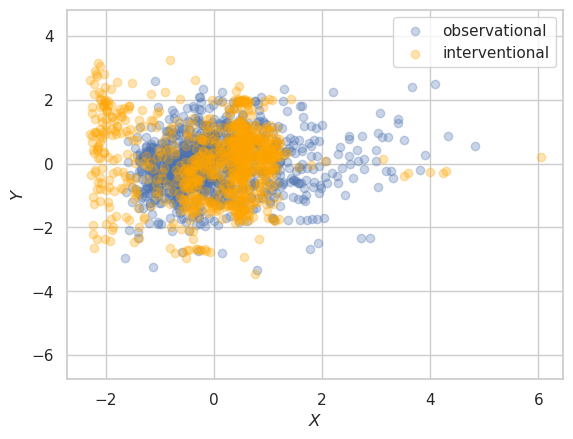}
\caption[Dataset 10: Interventional and observational samples.]{Dataset 10: Interventional and observational samples. (Top) training data. (Center) samples from flow model trained with only interventional data. (Bottom) samples from flow model trained with observational and interventional data.}
\end{figure}
\begin{figure}[H]
\centering
\includegraphics[scale=0.5]{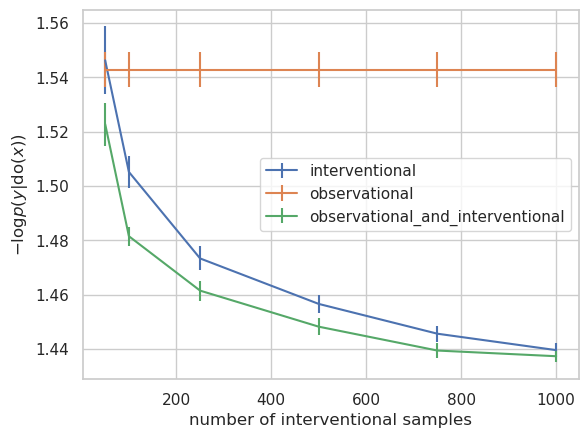}
\includegraphics[scale=0.5]{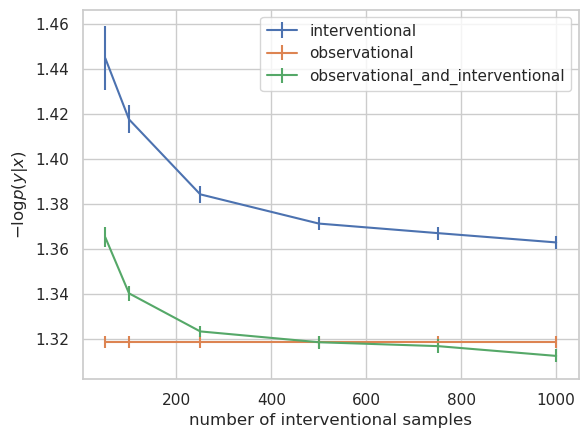}
\caption{Dataset 10: Performances measured in terms of negative log-likelihood on the interventional (top) and the observational (bottom) test sets.}
\end{figure}
\subsubsection{Dataset 11: $\#$ of latent confounders = 1, $\#$ of observed confounders = 3, random seed = 7}
\label{sec:all_results_with_confounder}
\begin{figure}[H]
\centering
\includegraphics[scale=0.45]{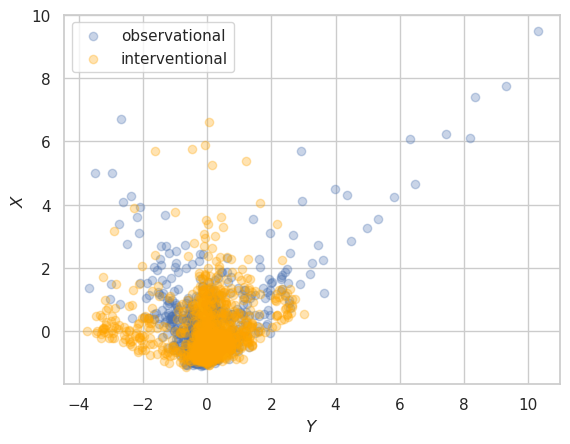}
\includegraphics[scale=0.45]{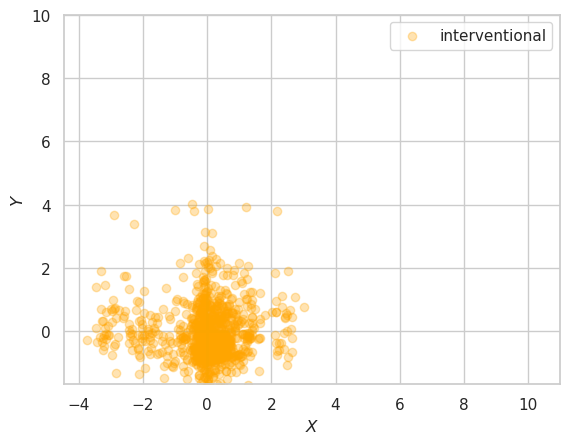}
\includegraphics[scale=0.45]{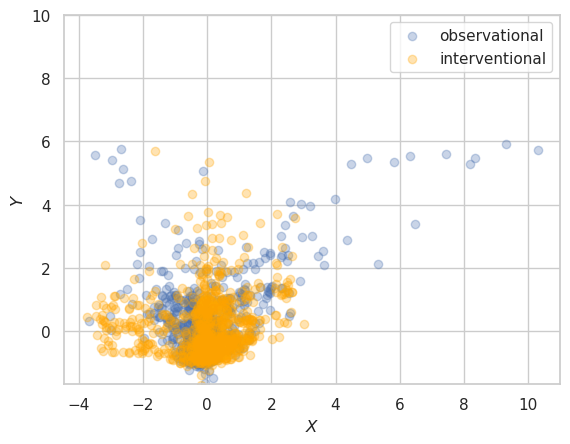}
\caption[Dataset 11: Interventional and observational samples.]{Dataset 10: Interventional and observational samples. (Top) training data. (Center) samples from flow model trained with only interventional data. (Bottom) samples from flow model trained with observational and interventional data.}
\end{figure}
\begin{figure}[H]
\centering
\includegraphics[scale=0.5]{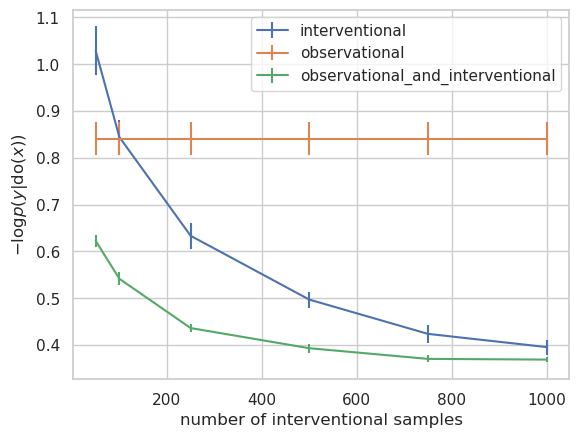}
\includegraphics[scale=0.5]{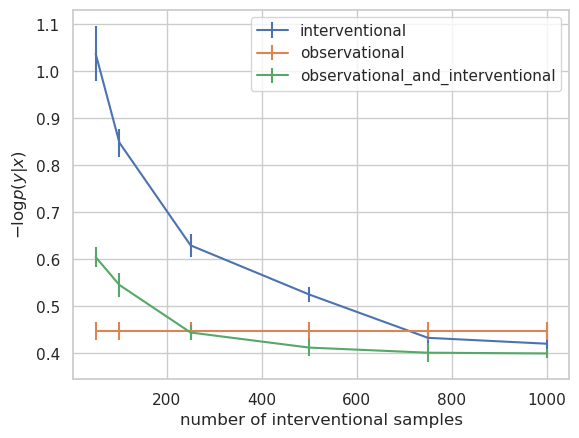}
\caption{Dataset 11: Performances measured in terms of negative log-likelihood on the interventional (top) and the observational (bottom) test sets.}
\end{figure}
\subsubsection{Dataset 12: $\#$ of latent confounders = 1, $\#$ of observed confounders = 3, random seed = 9}
\begin{figure}[H]
\centering
\includegraphics[scale=0.45]{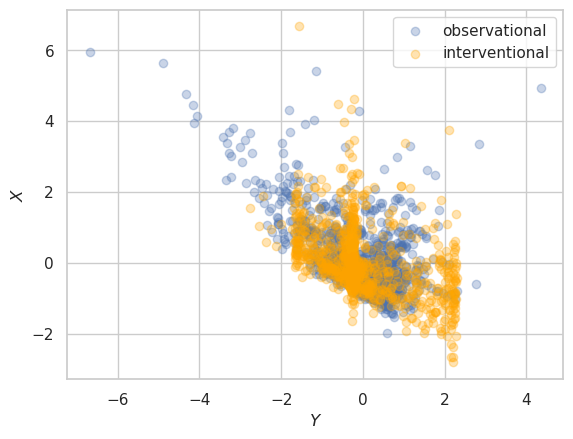}
\includegraphics[scale=0.45]{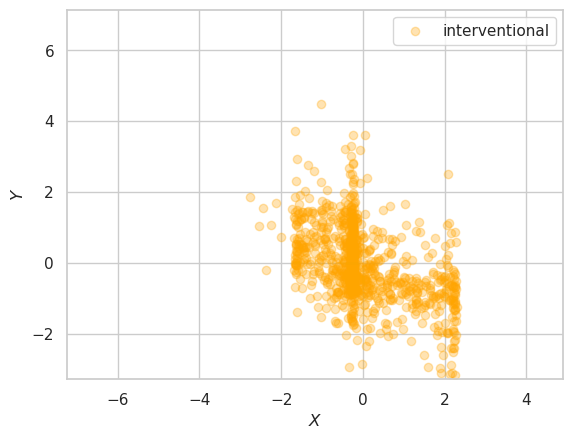}
\includegraphics[scale=0.45]{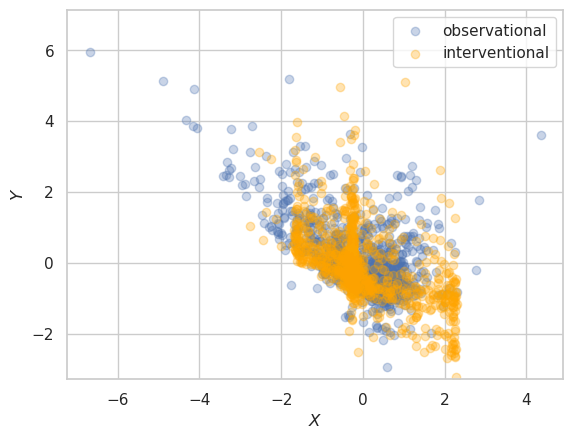}
\caption[Dataset 12: Interventional and observational samples.]{Dataset 12: Interventional and observational samples. (Top) training data. (Center) samples from flow model trained with only interventional data. (Bottom) samples from flow model trained with observational and interventional data.}
\end{figure}
\begin{figure}[H]
\centering
\includegraphics[scale=0.5]{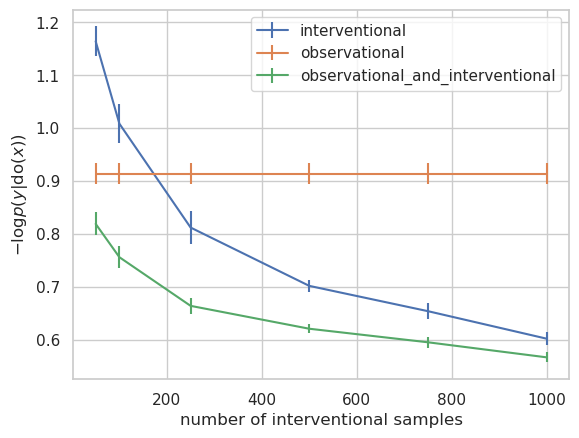}
\includegraphics[scale=0.5]{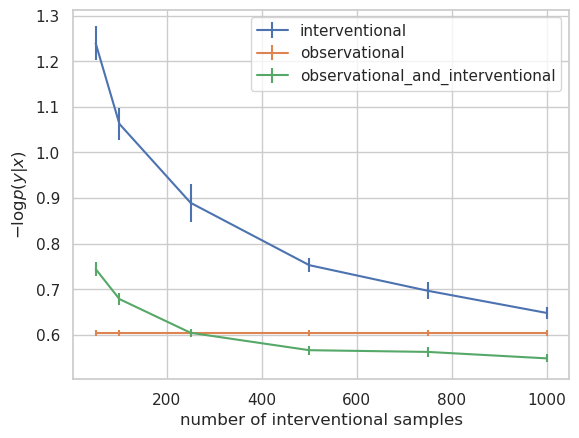}
\caption{Dataset 12: Performances measured in terms of negative log-likelihood on the interventional (top) and the observational (bottom) test sets.}
\end{figure}
\subsubsection{Dataset 13: $\#$ of latent confounders = 2, $\#$ of observed confounders = 1, random seed = 0}
\begin{figure}[H]
\centering
\includegraphics[scale=0.45]{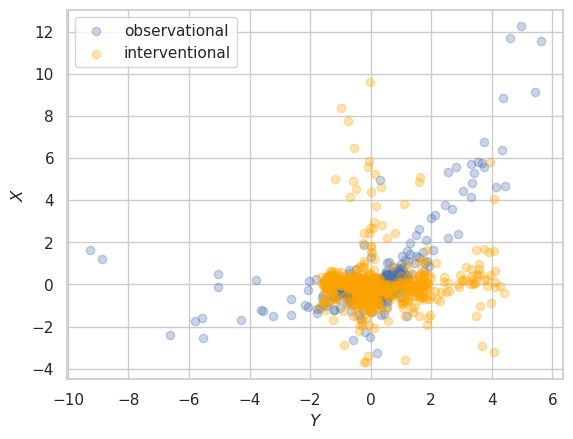}
\includegraphics[scale=0.45]{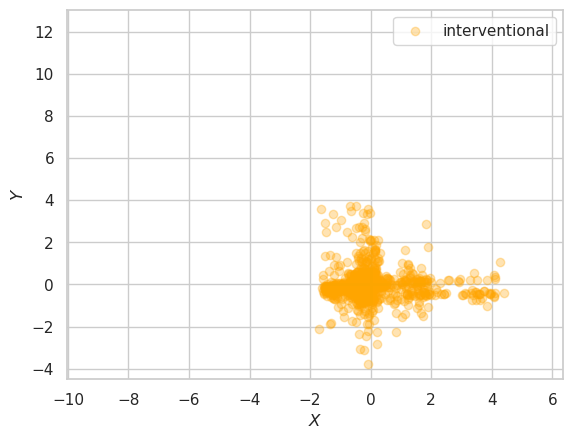}
\includegraphics[scale=0.45]{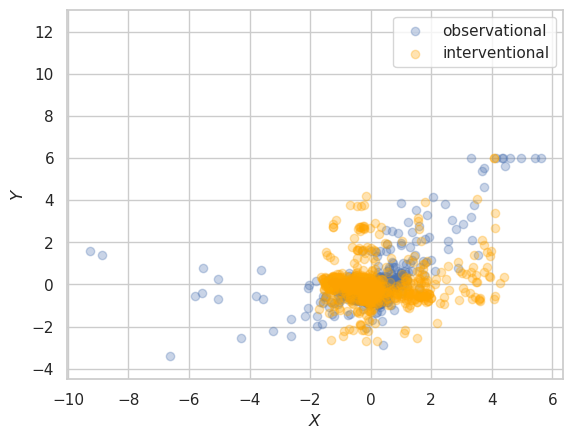}
\caption[Dataset 13: Interventional and observational samples.]{Dataset 13: Interventional and observational samples. (Top) training data. (Center) samples from flow model trained with only interventional data. (Bottom) samples from flow model trained with observational and interventional data.}
\end{figure}
\begin{figure}[H]
\centering
\includegraphics[scale=0.5]{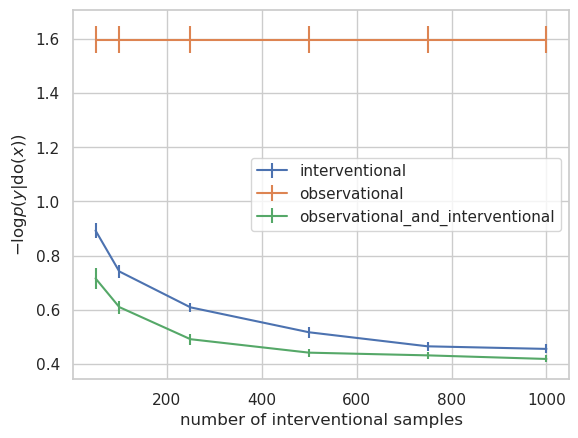}
\includegraphics[scale=0.5]{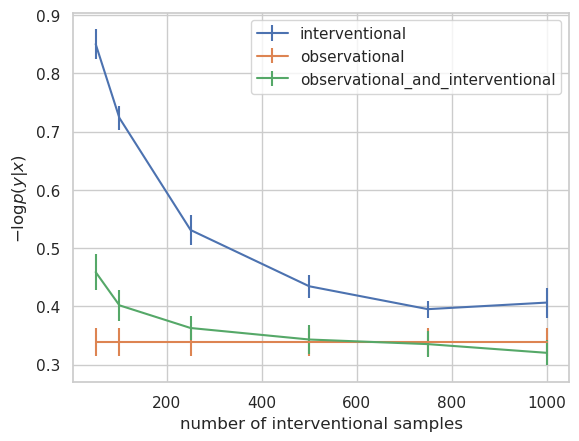}
\caption{Dataset 13: Performances measured in terms of negative log-likelihood on the interventional (top) and the observational (bottom) test sets.}
\end{figure}
\subsubsection{Dataset 14: $\#$ of latent confounders = 3, $\#$ of observed confounders = 3, random seed = 5}
\begin{figure}[H]
\centering
\includegraphics[scale=0.45]{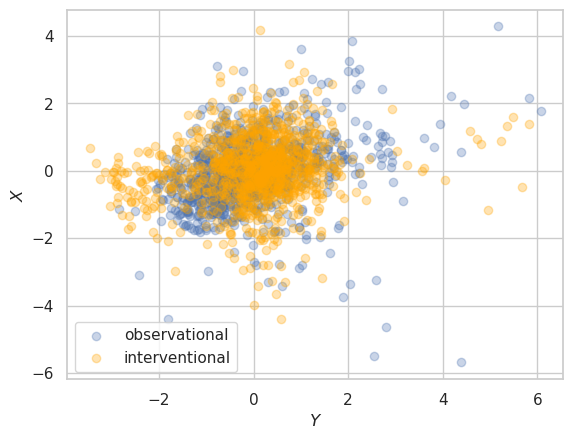}
\includegraphics[scale=0.45]{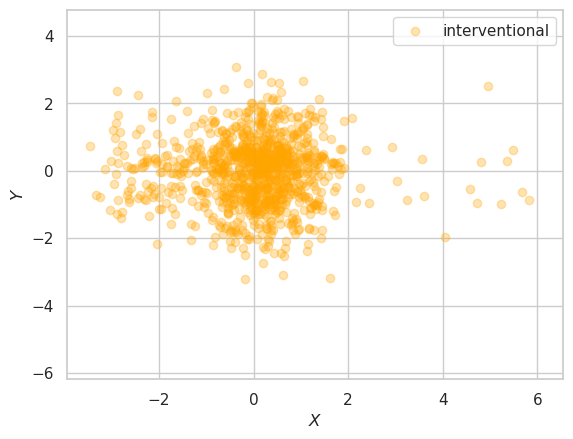}
\includegraphics[scale=0.45]{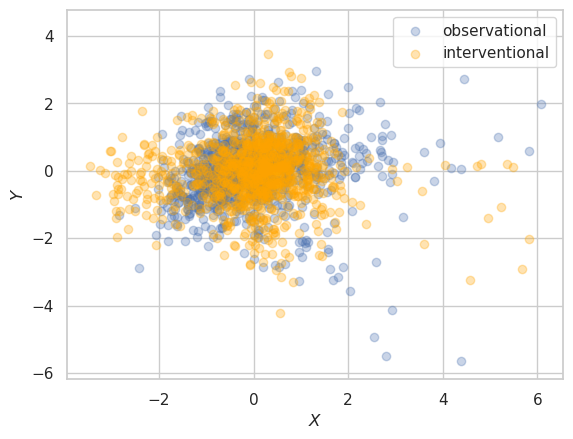}
\caption[Dataset 14: Interventional and observational samples.]{Dataset 14: Interventional and observational samples. (Top) training data. (Center) samples from flow model trained with only interventional data. (Bottom) samples from flow model trained with observational and interventional data.}
\end{figure}
\begin{figure}[H]
\centering
\includegraphics[scale=0.5]{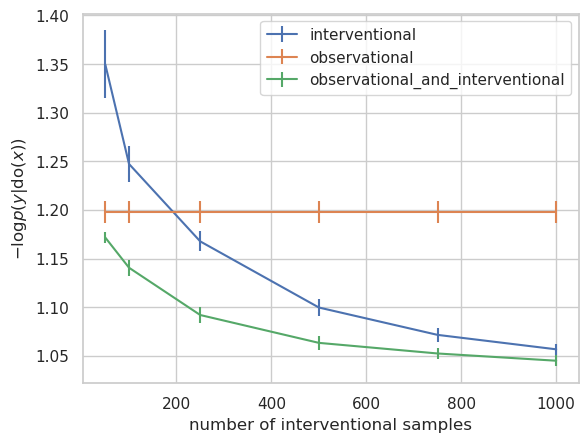}
\includegraphics[scale=0.5]{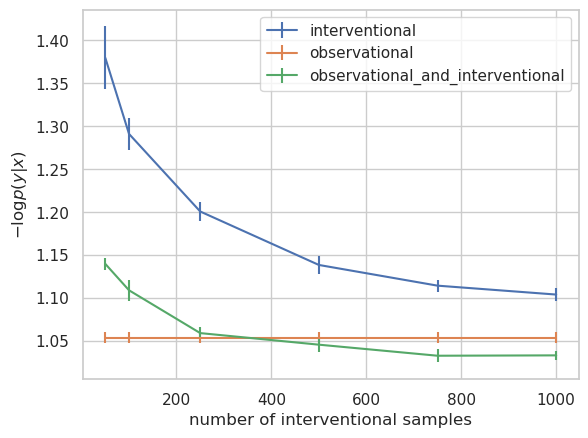}
\caption{Dataset 14: Performances measured in terms of negative log-likelihood on the interventional (top) and the observational (bottom) test sets.}
\end{figure}
\subsubsection{Dataset 15: $\#$ of latent confounders = 4, $\#$ of observed confounders = 4, random seed = 2}
\begin{figure}[H]
\centering
\includegraphics[scale=0.45]{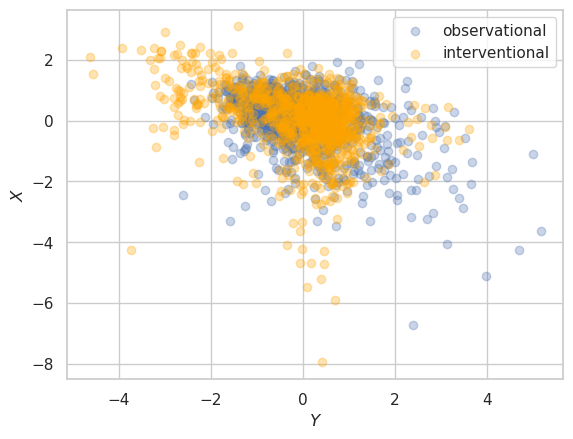}
\includegraphics[scale=0.45]{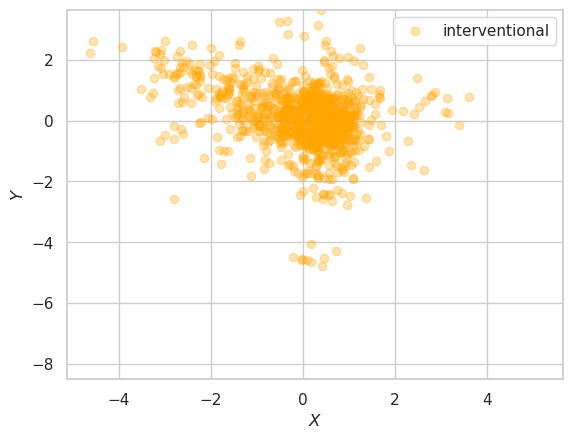}
\includegraphics[scale=0.45]{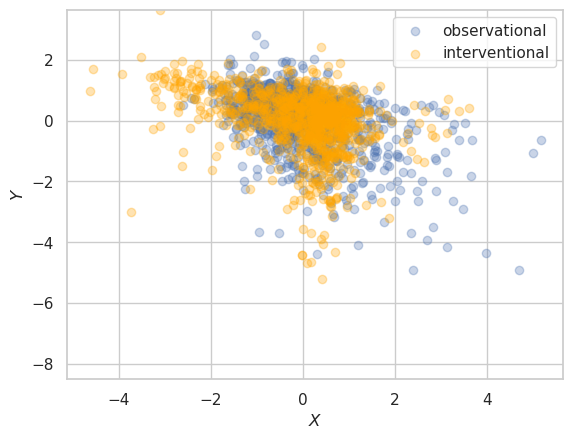}
\caption[Dataset 15: Interventional and observational samples.]{Dataset 15: Interventional and observational samples. (Top) training data. (Center) samples from flow model trained with only interventional data. (Bottom) samples from flow model trained with observational and interventional data.}
\end{figure}
\begin{figure}[H]
\centering
\includegraphics[scale=0.5]{nonlinear_data_observed_confounder/comparison_gp_seed_2_numZ_4_numC_4.png}
\includegraphics[scale=0.5]{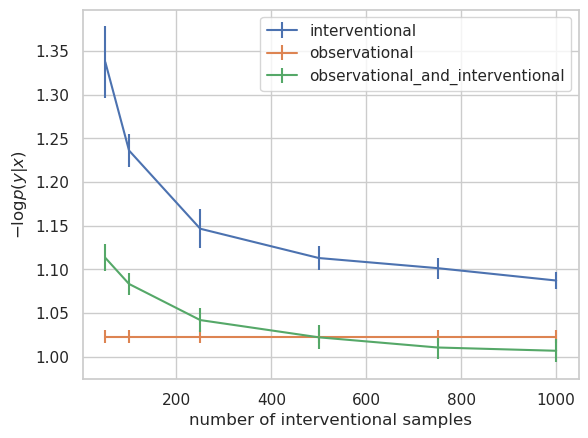}
\caption{Dataset 15: Performances measured in terms of negative log-likelihood on the interventional (top) and the observational (bottom) test sets.}
\end{figure}

\end{document}